\newtheorem{theorem}{Theorem}[section]
\newtheorem{thm}{Theorem}[section]
\newtheorem{cor}{Corollary}[section]
\theoremstyle{definition}
\newtheorem{defn}[theorem]{Definition}
\newcommand{\NN}{\text{NN}}
\DeclareMathOperator*{\argmax}{arg\,max}
\DeclareMathOperator*{\argmin}{arg\,min}
\newcommand{\Dt}{\mathcal{D}_{t}}
\newcommand{\numclust}{K} %Number of clusters, can change as needed
\newcommand{\Dbtw}{D_{t}^{\text{btw}}}
\newcommand{\Din}{D_{t}^{\text{in}}}
\newcommand{\M}{\mathcal{M}}
\newcommand{\kNN}{k_{\text{NN}}}
\title[Learning by Active Nonlinear Diffusion] %Use the shortened version of the full title
      {Learning by Active Nonlinear Diffusion}
\author[Mauro Maggioni and James M. Murphy]{}
\subjclass{Primary: 58F15, 58F17; Secondary: 53C35.}
 \keywords{Active learning, statistical learning, diffusion geometry, machine learning, spectral graph theory}
 \email{mauro.maggioni@jhu.edu}
 \email{jm.murphy@tufts.edu}
\thanks{This research is supported by NSF-DMS-125012, NSF-DMS-1724979, NSF-DMS-1708602, NSF-ATD-1737984, AFOSR FA9550-17-1-0280, NSF-IIS-1546392.}
\thanks{$^*$ Corresponding author: James M. Murphy}
\begin{document}
\maketitle

% Enter the first author's name and address:
\centerline{\scshape Mauro Maggioni}
\medskip
{\footnotesize
% please put the address of the first author
 \centerline{Department of Mathematics, Department of Applied Mathematics and Statistics,}
   \centerline{Mathematical Institute of Data Sciences, Institute of Data Intensive Engineering and Science}
   \centerline{Johns Hopkins University}
   \centerline{Baltimore, MD 21218, USA}
} % Do not forget to end the {\footnotesize by the sign }

\medskip

\centerline{\scshape James M. Murphy$^*$}
\medskip
{\footnotesize
 \centerline{Department of Mathematics}
   \centerline{Tufts University}
   \centerline{Medford, MA 02155, USA}
}

\bigskip

% The name of the associate editor will be entered by an editorial staff
% "Communicated by the associate editor name" is not needed for special issue.

%The abstract of your paper
\begin{abstract}
This article proposes an active learning method for high dimensional data, based on intrinsic data geometries learned through diffusion processes on graphs.  Diffusion distances are used to parametrize low-dimensional structures on the dataset, which allow for high-accuracy labelings of the dataset with only a small number of carefully chosen labels.  The geometric structure of the data suggests regions that have homogeneous labels, as well as regions with high label complexity that should be queried for labels.  The proposed method enjoys theoretical performance guarantees on a general geometric data model, in which clusters corresponding to semantically meaningful classes are permitted to have nonlinear geometries, high ambient dimensionality, and suffer from significant noise and outlier corruption.  The proposed algorithm is implemented in a manner that is quasilinear in the number of unlabeled data points, and exhibits competitive empirical performance on synthetic datasets and real hyperspectral remote sensing images.

\end{abstract}

%The title of your section 1
\section{Introduction}

Statistical and machine learning techniques are revolutionizing the sciences.  Advances in medical diagnosis \cite{Esteva2017_Dermatologist}, automatic game playing \cite{Silver2016_Mastering}, and computer vision \cite{Krizhevsky2012_Imagenet} have been sparked by seismic advances in computational power and innovative learning algorithms and architectures.  However, many state-of-the-art machine learning approaches are predicated on the availability of huge labeled data sets that may be used to train the parameters of the underlying algorithms.  Unfortunately, many important scientific problems do not have large, accurately labeled training sets readily available.  This limits the practicality of many state-of-the-art supervised methods.  Moreover, several fields---medicine and remote sensing, for example---are not amenable to easily generating new labeled data points at scale, due to the high cost of labeling data points.  This renders the applicability of many state-of-the-art supervised learning algorithms---including modern deep learning methods which may depend on millions of parameters---problematic, as generating sufficient training data may be resource-intensive.  

When training datasets do not exist or are burdensome to generate, alternative methods may be used to exploit the glut of unlabeled data.  \emph{Data augmentation} \cite{Tanner1987_Calculation, Van2001_Art} may be used to generate new labeled training points by, for example, perturbing existing training points in a suitable manner.  \emph{Unsupervised methods}---those using no training (labeled) data at all---are ideal when insufficient training data is available, as they work entirely on the unlabeled data.  However, unsupervised methods may be inadequate for highly complex data.  Indeed, such approaches enjoy performance guarantees only when rigid geometrical or statistical properties are made on the data \cite{Arias2011, Arias2017, Schiebinger2015, Mixon2017, Little2017Path, Trillos2019_Geometric}.  Methods that are \emph{semi-supervised} \cite{Chapelle2006_Semi} provide a middle ground between the supervised (abundant labeled data for training) and unsupervised (no labeled data for training) regimes, taking advantage of large quantities of unlabeled data while still allowing labeled points to influence classification.  When the unsupervised structure of the data (e.g. its geometric or statistical properties) are compatible with the labels of the data, semisupervised learning may improve over unsupervised learning and also over classical supervised learning with the same fixed labeled training data.  

This article proposes an \emph{active learning scheme} for high-dimensional datasets exhibiting intrinsically low-dimensional structure.  Active learning is a form of semi-supervised learning in which an algorithm uses the unlabeled data to determine which data points to query for labels.  In the proposed method, the geometry of the data is parametrized through diffusion processes defined on a data-dependent graph \cite{Coifman2005, Coifman2006}, which are robust to high ambient dimensionality, noise, and non-spherical cluster shapes.  The inferred geometry---which is computed without supervision---is then analyzed to determine which data points should be queried for labels; the query points are chosen to have maximum impact, so that relatively few are needed to achieve good empirical performance.  The proposed active learning scheme is called \emph{learning by active nonlinear diffusion (LAND)}.  

\subsection{Major Contributions and Article Outline}

The major contributions of this article are twofold.  First, LAND is proposed and is proven to perform well for data generated according to a flexible geometric data model.  With only a small number of queries, LAND achieves perfect accuracy even for data that is high-dimensional, contains classes that are highly nonlinear or non-compact, and is corrupted by significant noise and outliers.  The theoretical results are derived from an analysis of the underlying diffusion distances, which in turn are amenable to analysis using techniques from spectral graph theory and the analysis of Markov chains.  

Second, the proposed method is implemented numerically.  Taking advantage of fast nearest neighbor search algorithms and eigensolvers for sparse matrices, the proposed method is proven to enjoy \emph{quasilinear complexity} in the number of sample points under the proposed data model, which supposes that the underlying data has intrinsically small dimensionality (in the sense of lying close to a low-dimensional manifold, for example).  LAND is demonstrated on synthetic datasets as well as real hyperspectral images, demonstrating its suitability for high-dimensional geometric data.

The remainder of the article is organized as follows.  Background on active learning and diffusion geometry are presented in Section \ref{sec:Background}.  The geometric data model and algorithm are proposed and analyzed in Section \ref{sec:Algorithms}.  Comparisons with related works are also presented in Section \ref{sec:Algorithms}.  Numerical experiments are in Section \ref{sec:Experiments}.  Conclusions and future research directions are in Section \ref{sec:Conclusions}.

\section{Background}\label{sec:Background}The proposed active learning algorithm exploits the underlying diffusion geometry of data to efficiently determine points to query for labels.  In this section, we review active learning as well as diffusion geometry.  

\subsection{Background on Active Learning}
\label{subsec:BackgroundAL}

Active learning is a type of semisupervised learning in which unlabeled data is analyzed to determine which points to query for labels \cite{Settles2009_Active}.  It differs from traditional semisupervised learning in that the labeling algorithm is permitted to \emph{ask for the labels of certain points}, instead of being provided with a random sample of labeled points.  Under certain data models and methods for parsimoniously selecting query points, the active learning approach can perform as well as traditional semi-supervised or even supervised learning, with far fewer labels \cite{Cohn1994_Improving, Dasgupta2008_General}.  The crucial theoretical question is how to determine which data points should be queried for labels.  The active learning framework assumes there is an underlying budget that can be spent to label points.  This budget should be spent carefully, in order to only query points that are most likely to prove significant for the overall labeling of the data.  

Approaches to active learning may be categorized into two general strategies: hypothesis space reduction and cluster exploitation \cite{Dasgupta2011_Two}.  The first category conceives of supervised learning as a process of using training points to select a ``good" classifier from a large space of possible classifiers.  Asymptotically, as the number of labeled sample points $n_{\ell}\rightarrow\infty$, a consistent supervised learning procedure converges to an optimal classifier.  In practice, the rate of convergence in $n_{\ell}$ is relevant---the faster the rate of convergence, the better the learning algorithm.  In this framework, active learning is a family of methods for selecting query points such that the convergence rate towards a good classifier is fast in $n_{\ell}$, in particular faster than passive sampling methods, for example sampling labels uniformly at random.  That is, query points should be influential in distinguishing between different possible classifiers, and should allow for convergence towards the ``optimal" classifier with fewer points than if the labeled points were selected uniformly at random.  These active learning approaches can, in certain cases, significantly improve the expected error rate of the classifier as a function of $n_{\ell}$ \cite{Balcan2007_Margin, Dasgupta2008_General, Castro2008_Minimax, Balcan2009_Agnostic, Hanneke2011_Rates}.

A second category of active learning approaches seek to exploit cluster structure in the data in order to emphasize sampling near complex regions of the data with heterogeneous labels, and to avoid oversampling near simple, homogeneous regions of the data.  Indeed, if a cluster---detected through a prescribed clustering algorithm---can be estimated as relatively pure with respect to its labels, then it may be efficient to simply give all points in the cluster the same label and to focus the limited querying resources in more ambiguous regions.  A crucial problem in this framework is to tap the budget in a way that balances two different tasks: confirming the label homogeneity of particular data regions and exploring new data regions.  Methods based on iteratively pruning hierarchical clustering trees have been proposed \cite{Dasgupta2008_Hierarchical} and analyzed in terms of label smoothness with respect to the scales of the tree \cite{Urner2013_PLAL}.

The method proposed in this paper is related to the second category, and exploits the underlying geometry of the data sample in order to estimate the most impactful points to query for labels.  In order to develop notions of cluster geometry that are robust to being embedded in a high dimensional space, to being non-spherical in shape, and to corruption by noise and outlier points, the diffusion geometry of the underlying data is estimated and used as the basis for all subsequent pairwise comparisons.  This provides a set of (essentially) geometrically intrinsic coordinates for the data that are robust to dimensionality, nonlinearity, and noise.  

An example of synthetic toy data for which diffusion geometry notably decreases the number of active learning queries necessary for good accuracy appears in Figure \ref{fig:ActiveLearningClustersExample}.  The role of diffusion geometry is crucial to the proposed method, and it is reviewed in detail in Section \ref{subsec:BackgroundDG}.

\begin{figure}[!htb]
	\centering
	\begin{subfigure}[t]{.49\textwidth}
		\includegraphics[width=\textwidth]{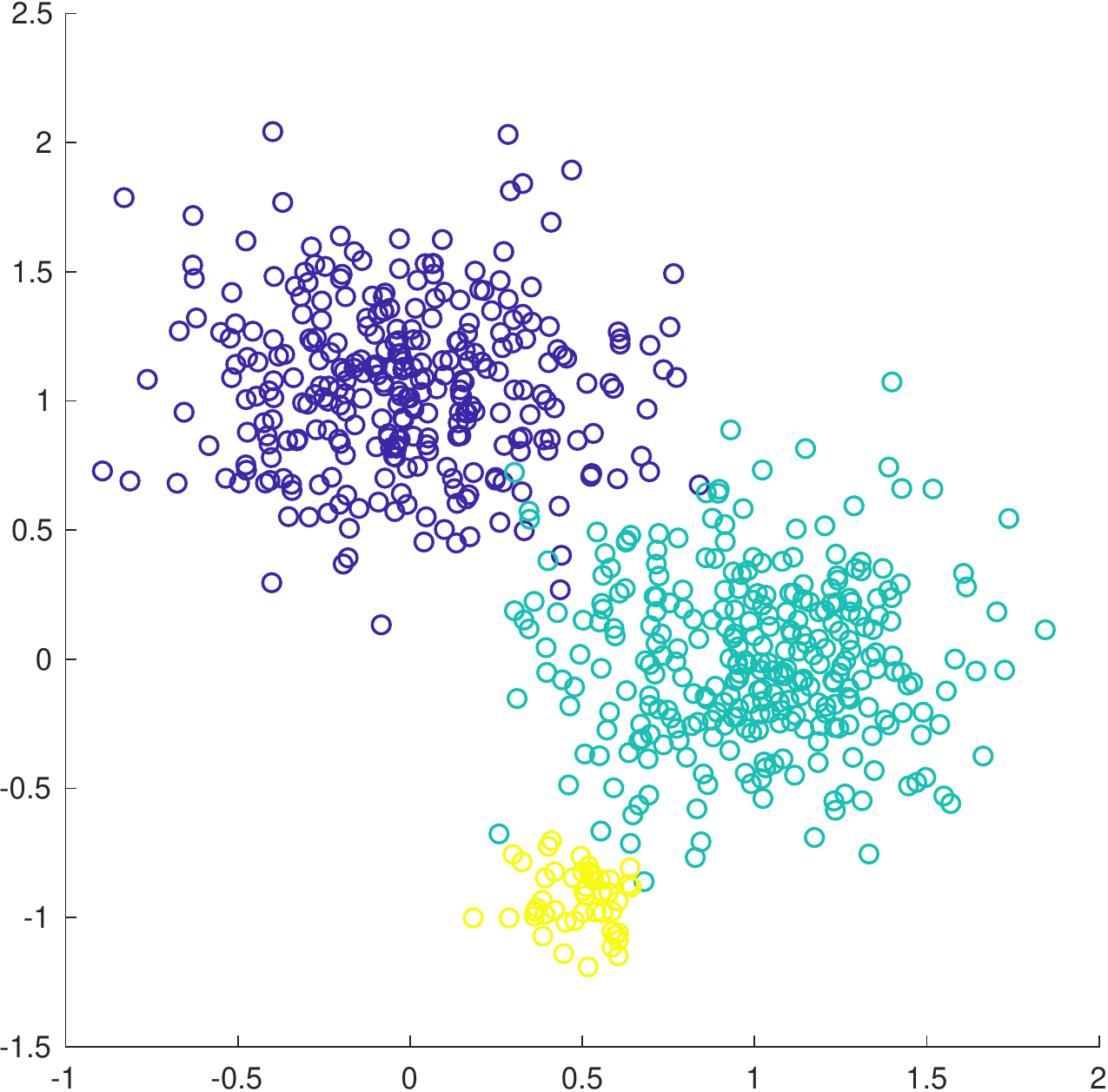}
		\subcaption{Gaussian data}
	\end{subfigure}
	\begin{subfigure}[t]{.49\textwidth}
		\includegraphics[width=\textwidth]{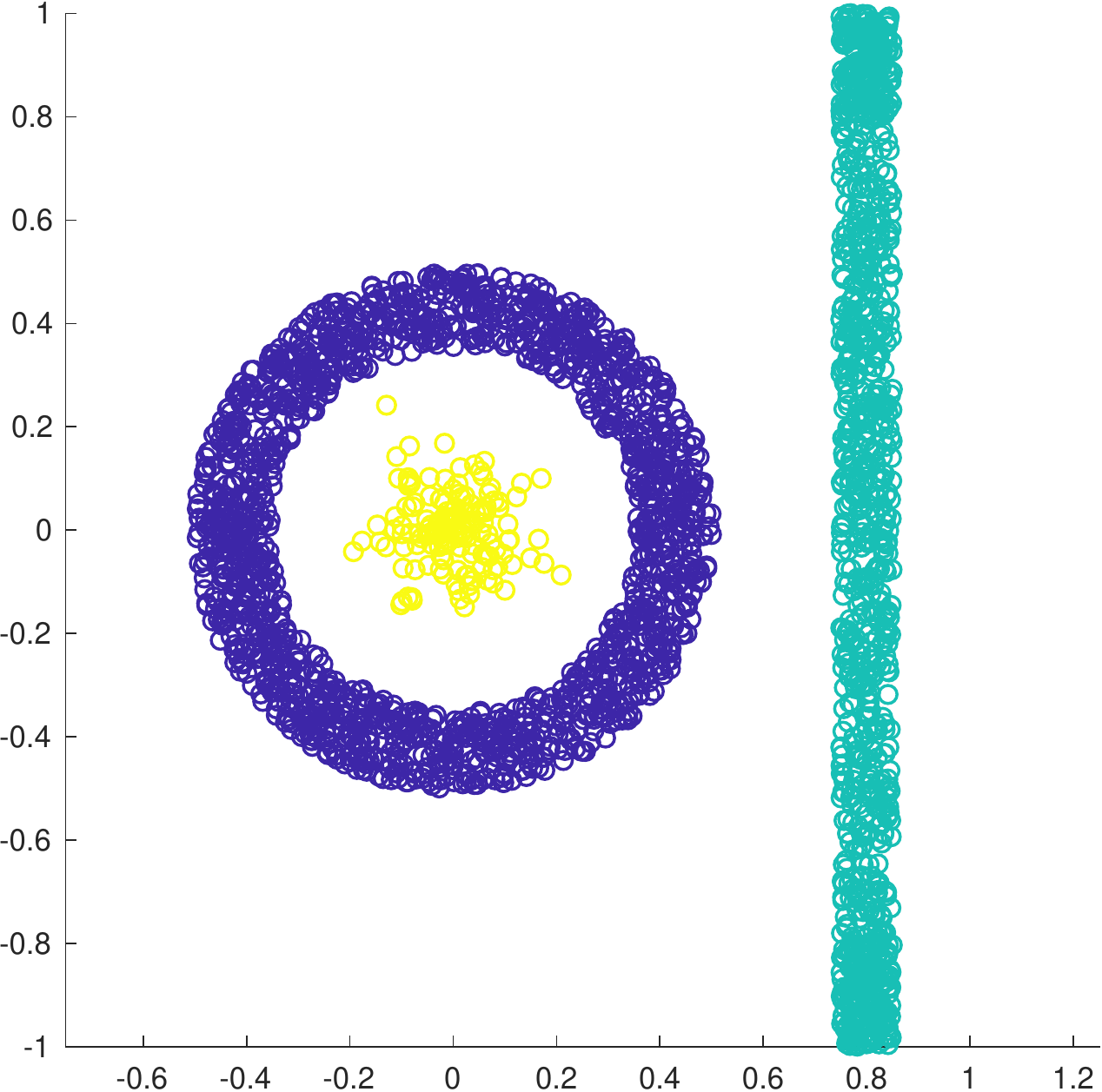}	
		\subcaption{Nonlinear and elongated data}
	\end{subfigure}
		\caption{Data colored by class label.  Both the data in (a) and (b) exhibit cluster structure, which can guide active learning in the case that the labels are constant on these clusters.  Indeed, on the left, using a simple clustering algorithm such as $K$-means suggests that only 3 labels are necessary to correctly label the entire dataset.  For the data on the right, many more than three labels are necessary if $K$-means is used for the underlying clustering, since the clusters are highly elongated and nonlinear.  Indeed, $K$-means will split the annular and elongated clusters.  On the other hand, if pairwise comparisons are made with distances other than Euclidean distances, it may be possible that active learning achieves near perfect results with only 3 labels.  The proposed active learning scheme gains robustness to class shape via diffusion geometry, and is suitable for data in both (a) and (b).}
	\label{fig:ActiveLearningClustersExample}		
\end{figure}

\subsection{Background on Diffusion Geometry}
\label{subsec:BackgroundDG}

Let $X=\{x_{i}\}_{i=1}^{n}\subset\mathbb{R}^{D}$ be discrete data.  The \emph{diffusion geometry} of $X$ is learned through Markov diffusion processes defined on a graph with nodes corresponding to the points $\{x_{i}\}_{i=1}^{n}$ and transition probabilities proportional to the similarities of these points in some metric \cite{Coifman2005, Coifman2006}.  That is, points that are nearby have high probabilities of pairwise transition, and points that are far apart have low probabilities of transition.  By analyzing the diffusion process across time scales, natural geometric structure in the data can be inferred. 

More precisely, let $\mathcal{G}=(X,W)$ be a weighted, undirected graph with nodes $X$ and weight $W_{ij}\in [0,1]$ between $x_{i},x_{j}\in X$.  Typically $W_{ij}=\mathcal{K}(x_{i},x_{j})$ for some symmetric, radial kernel $\mathcal{K}:\mathbb{R}^{D}\times\mathbb{R}^{D}\rightarrow [0,1]$.  The weight matrix $W$ is normalized to produce a Markov transition matrix $P=D^{-1}W$, where $D$ is the diagonal degree matrix with $D_{ii}=\sum_{j=1}^{n}W_{ij}$.  The matrix $P$ is row-stochastic, and \emph{diffusion distances} measure how similar points are according to their transition probabilities in $P$.  

\begin{defn}Let $P$ be a Markov transition matrix defined on $X=\{x_{i}\}_{i=1}^{n}$.  Let $p_{t}(x_{i},x_{j})=(P^{t})_{ij}$.  The \emph{diffusion distance between $x_{i}$ and $x_{j}$ at time $t$ with respect to weight $w:X\rightarrow [0,\infty)$} is $$D_{t}(x_{i},x_{j})=\|p_{t}(x_{i},\cdot)-p_{t}(x_{j},\cdot)\|_{l^{2}(w)}=\sqrt{\sum_{\ell=1}^{n}(p_{t}(x_{i},x_{\ell})-p_{t}(x_{j},x_{\ell}))^{2}w(x_{\ell})}.$$
\end{defn}

The time parameter $t$ is a global time scale at which the diffusion process runs.  For small $t$, the process has run for a short amount of time, which may prevent important, large scale geometric structures in the data from impacting the diffusion distances.  On the other extreme, the diffusion distances all collapse to 0 as $t\rightarrow\infty$, under the assumption that $P$ is ergodic, since $P^{t}$ converges to the rank 1 matrix with the stationary distribution $\pi$ as rows, where $\pi P=\pi$.  When the data has underlying geometric structure, $t$ parametrizes multiscale hierarchy, with small $t$ realizing fine-scale structures and $t$ large realizing coarse-scale structures \cite{Nadler2007_Fundamental, Gavish2013_Normalized, Maggioni2018_Learning}.  

While $P$ is not symmetric, it is diagonally conjugate to a symmetric matrix: $D^{\frac{1}{2}}PD^{-\frac{1}{2}}=D^{-\frac{1}{2}}WD^{-\frac{1}{2}}$.  Hence, $P$ admits a spectral decomposition, which can be exploited for computations of diffusion distance.  More precisely, let $\{(\lambda_{\ell},\phi_{\ell})\}_{\ell=1}^{n}$ be the eigenvalues and eigenvectors of $D^{-\frac{1}{2}}WD^{-\frac{1}{2}}$, sorted so that $1=\lambda_{1}>|\lambda_{2}|\ge\dots\ge |\lambda_{n}|$.  Then $$(P^{t})_{ij}=\sum_{\ell=1}^{n}\lambda_{\ell}^{t}\psi_{\ell}(x_{i})\varphi_{\ell}(x_{j}),$$where $\psi_{\ell}(x_{i})=\phi_{\ell}(x_{i})/\sqrt{\pi(x_{i})}, \varphi_{\ell}(x_{j})=\phi_{\ell}(x_{j})\sqrt{\pi(x_{j})}$.  If $\{\psi_{\ell}\}_{\ell=1}^{n}, \{\varphi_{\ell}\}_{\ell=1}^{n}$ are understood as column vectors, this is equivalent to the decomposition $P^{t}=\sum_{\ell=1}^{n}\lambda_{\ell}^{t}\psi_{\ell}\varphi_{\ell}^{\top}$.  In particular, $\{\varphi_{\ell}\}_{\ell=1}^{n}$ is an orthonormal basis for $l^{2}(1/\pi)$, so that diffusion distances with respect to the weight $w(x_{i})=1/\pi(x_{i})$ may be written in terms of $\{\phi_{\ell}\}_{\ell=1}^{n}$:  $$D_{t}(x_{i},x_{j})=\|p_{t}(x_{i},\cdot)-p_{t}(x_{j},\cdot)\|_{l^{2}(1/\pi)}=\sqrt{\sum_{\ell=1}^{n}\lambda_{\ell}^{2t}(\psi_{\ell}(x_{i})-\psi_{\ell}(x_{j}))^{2}}.$$  If the underlying transition matrix $P$ is approximately low rank, the modulus of the eigenvalues $\{\lambda_{\ell}\}_{\ell=1}^{n}$ decays rapidly, so that for $t$ sufficiently large, this sum may be truncated after $M=O(1)$ eigenpairs yielding the approximate diffusion distances

$$D_{t}(x_{i},x_{j})\approx\sqrt{\sum_{\ell=1}^{M}\lambda_{\ell}^{2t}(\psi_{\ell}(x_{i})-\psi_{\ell}(x_{j}))^{2}}.$$This truncation has the added benefit of denoising the diffusion distances, since the eigenvectors associated with eigenvalues away from 1 in modulus (in some sense the high frequency eigenvectors) correspond not to intrinsic geometric structures in the data, but to random fluctuations produced by sampling \cite{Trillos2018error}.  The embedding $$x_{i}\mapsto (\lambda_{1}^{t}\psi_{1}(x_{i}),\lambda_{2}^{t}\psi_{2}(x_{i}),\dots,\lambda_{M}^{t}\psi_{M}(x_{i}))$$ may be understood as a form of nonlinear dimension reduction, and also as a set of (essentially) geometrically intrinsic coordinates for the data \cite{Lafon2006_Diffusion}.

%%%
%%%

\section{Proposed Algorithm and Analysis}\label{sec:Algorithms}

Let $\{x_{i}\}_{i=1}^{n}\subset\mathbb{R}^{D}$.  The LAND algorithm requires determining which points should be queried for labels.  This is done by estimating modes of the nonlinear clusters in the data through a combination of density estimation and the diffusion geometry of the data.

Let $p:\mathbb{R}^{D}\rightarrow [0,\infty)$ be a kernel density estimator, for example $$p(x)=\sum_{y\in \NN_{k}(x)}\exp(-\|x-y\|_{2}^{2}/\sigma_{0}^{2}),$$ where $\NN_{k}(x)$ are the $k$-nearest neighbors of $x$ in Euclidean distance, and $\sigma_{0}$ is a scaling parameter.  Let $D_{t}$ be the diffusion distance metric on $X$, and let \begin{align}\label{eqn:rho}\rho_{t}(x)=
\begin{cases}
\min\{D_{t}(x,y) \ | p(y)\ge p(x), x\neq y\}, &x\neq \argmax_{z}p(z)\\
\max_{y\in X} D_{t}(x,y),&x=\argmax_{z}p(z)
\end{cases}
\end{align}be the ($t$-dependent) diffusion distance between a point and its nearest diffusion neighbor of higher density if $x$ is not the maximizer of $p(x)$, and the maximum diffusion distance to another point if $x$ is the maximizer of $p(x)$.  The modes of the data are determined through the quantity $$\Dt(x)=p(x)\rho_{t}(x).$$  Points will have a large $\Dt$ value if they are high density and are $D_{t}$-far from other high density points.  Following \cite{Maggioni2018_Learning}, we characterize the modes of $X$ as the maximizers of $\Dt$.  This notion is robust to data geometry---as captured by diffusion distances---and provides a multiscale hierarchy to the structure of the data.  See Figure \ref{fig:Dt_SyntheticData} for an illustration of how $\Dt$ changes with time.  

\begin{figure}[!htb]
	\centering
	\begin{subfigure}[t]{.49\textwidth}
		\includegraphics[width=\textwidth]{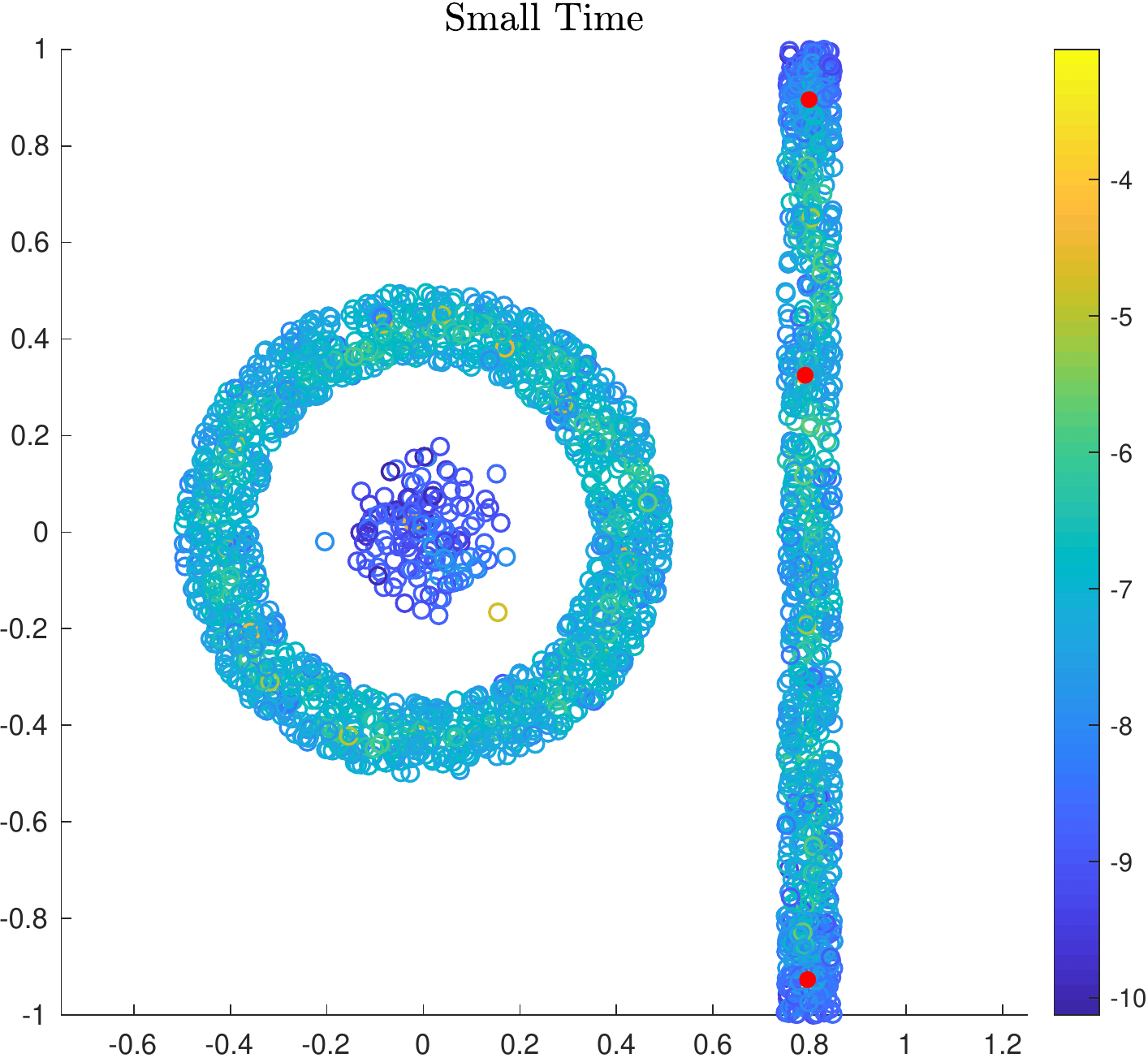}
		\subcaption{Computed $\Dt$ values for $\log_{10}(t)=2$.  The corresponding modes (i.e. the maximizers of $\Dt$) are shown in red.}
	\end{subfigure}
	\begin{subfigure}[t]{.49\textwidth}
		\includegraphics[width=\textwidth]{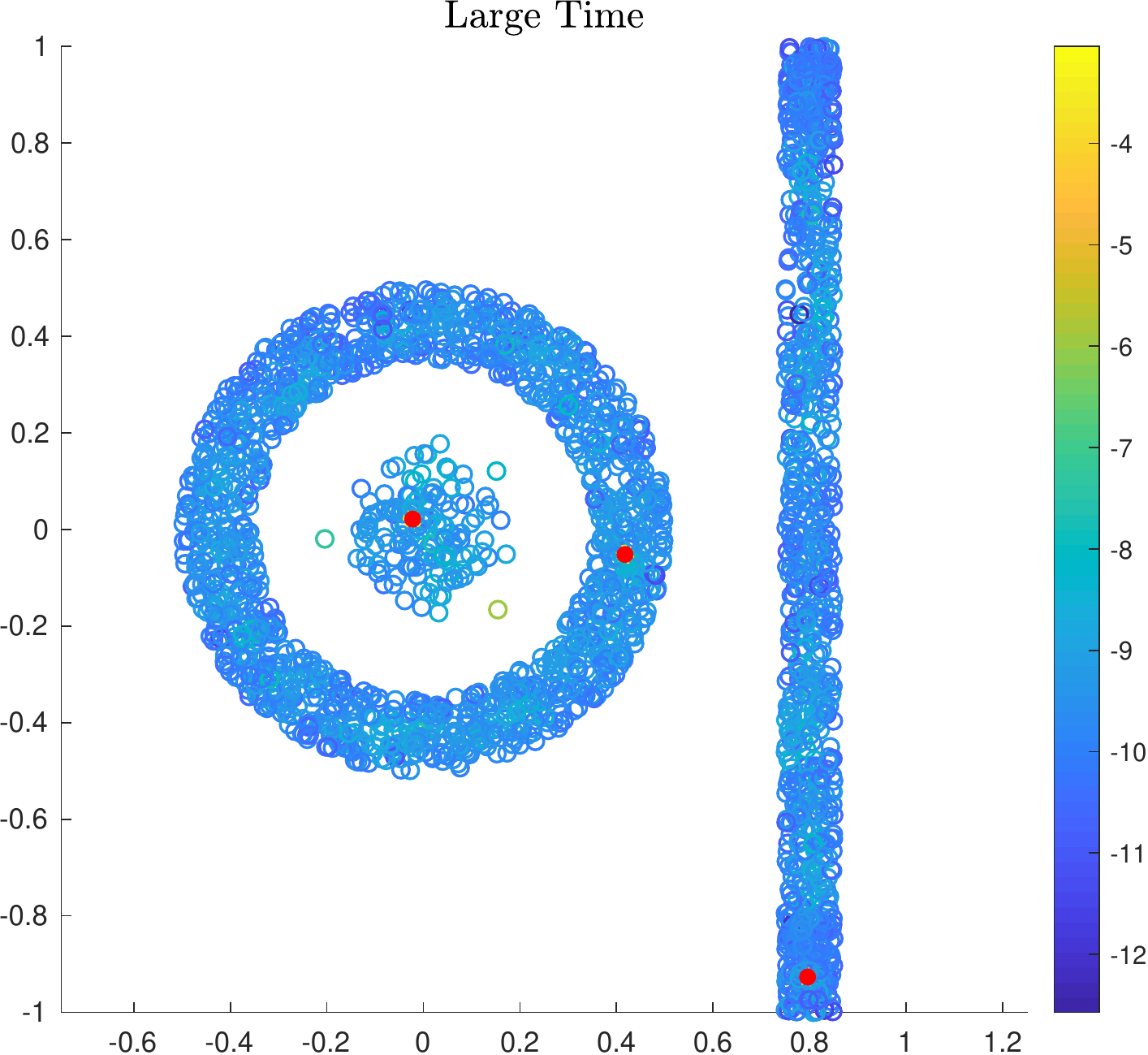}
		\subcaption{Computed $\Dt$ values for $\log_{10}(t)=9$.  The corresponding modes (i.e. the maximizers of $\Dt$) are shown in red.}
	\end{subfigure}
		\begin{subfigure}[t]{.49\textwidth}
		\includegraphics[width=\textwidth]{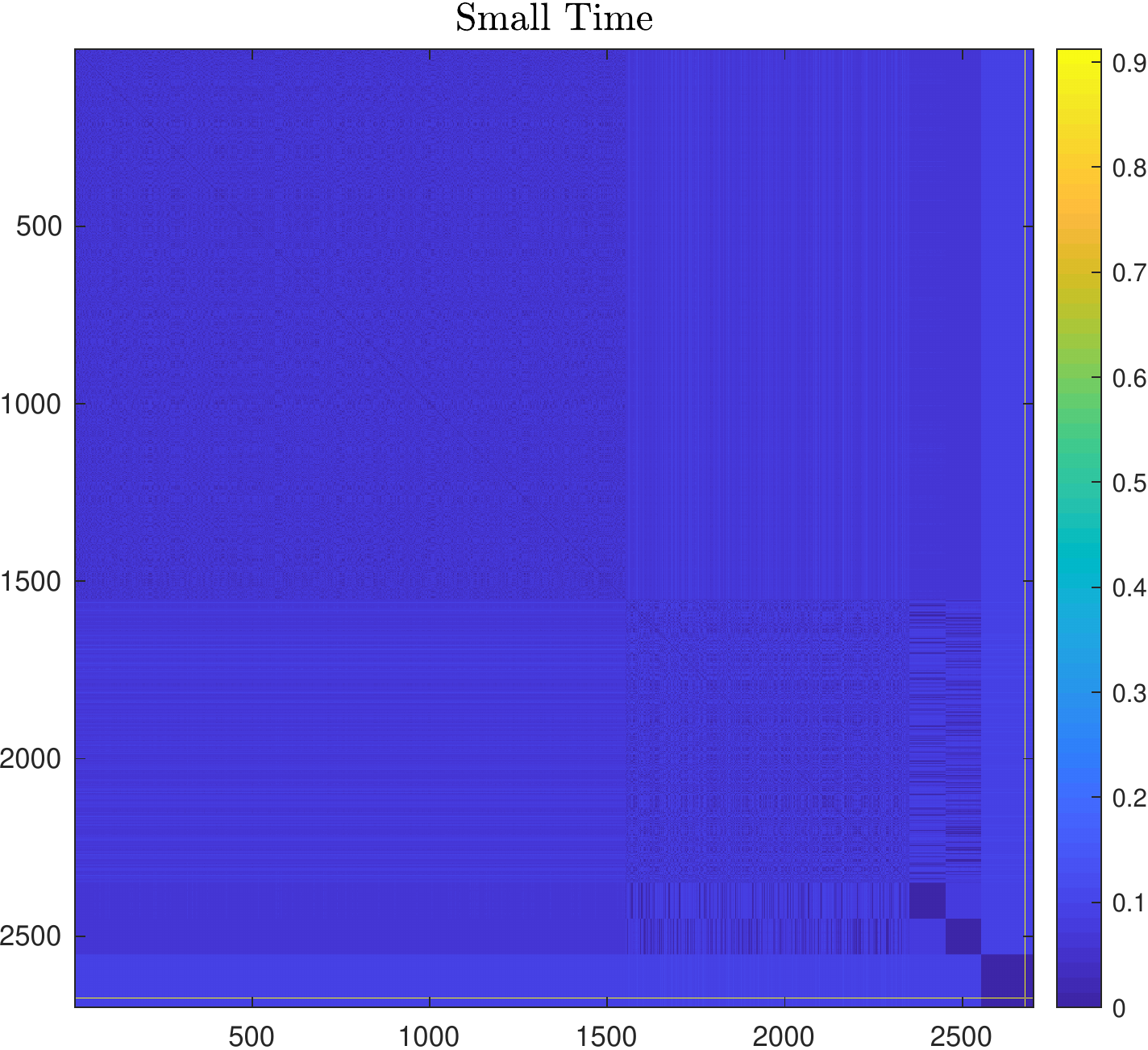}
		\subcaption{$P^{t}$ for $\log_{10}(t)=2$}
	\end{subfigure}
	\begin{subfigure}[t]{.49\textwidth}
		\includegraphics[width=\textwidth]{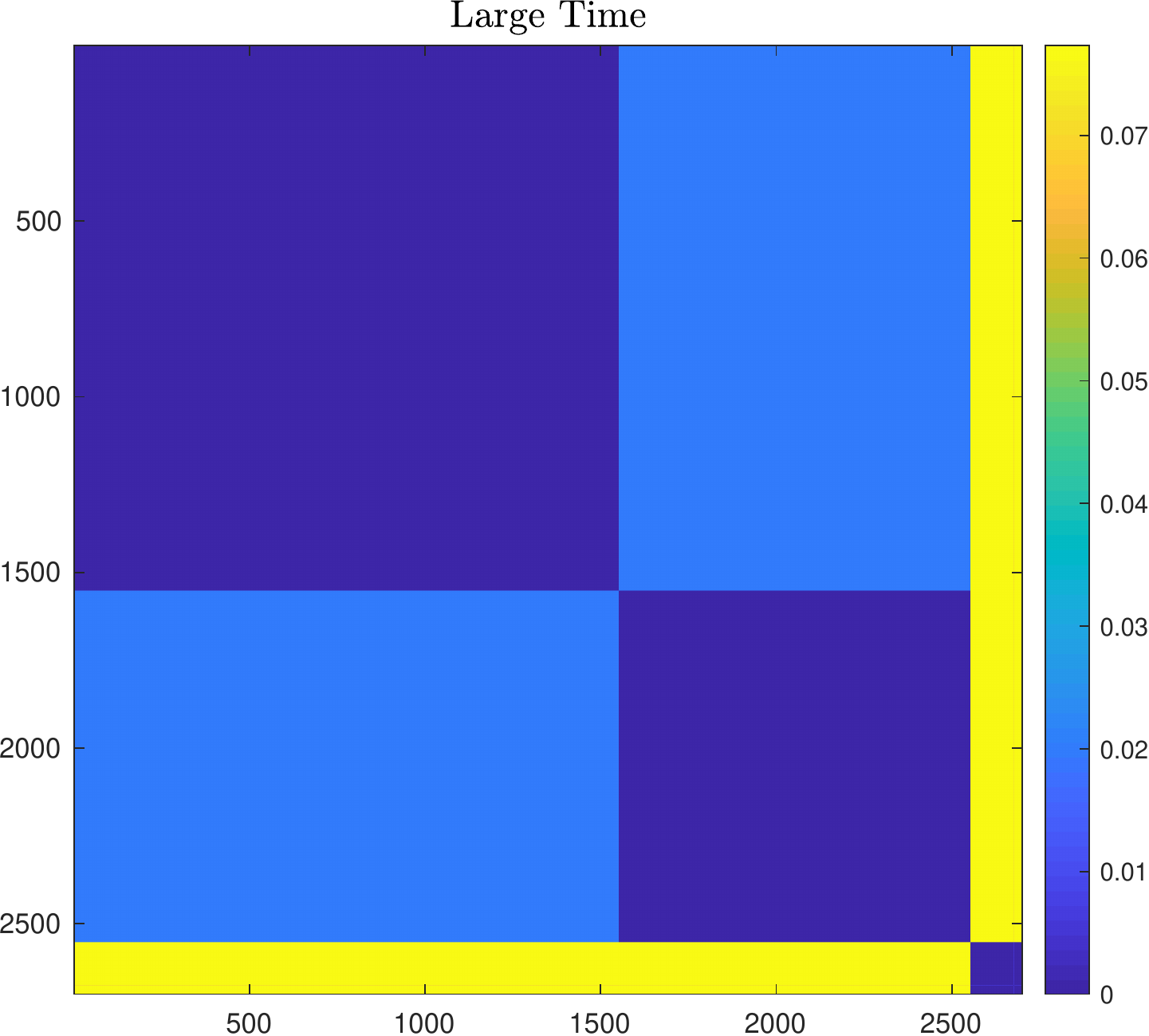}
		\subcaption{$P^{t}$ for $\log_{10}(t)=9$}
	\end{subfigure}
		\caption{In (a) and (b), the values of $\Dt$ are shown for synthetic geometric data from Figure \ref{fig:ActiveLearningClustersExample} (b) for $\log_{10}(t)=2$ and $\log_{10}(t)=9$, respectively.  We see that for small values of $t$, the mode estimation incorrectly places the first three modes on the highly elongated cluster.  For larger time values, the underlying random walk reaches a mesoscopic equilibrium and correct mode estimation is achieved.  The emergence of mesoscopic equilibria is apparent in (c), (d), which show the matrix of diffusion distances at time $\log_{10}(t)=2$ and $\log_{10}(t)=9$, respectively.  When $\log_{10}(t)=2$, $P^{t}$ has not mixed, and there are still substantial within-cluster distances.  For $\log_{10}(t)=9$, $P^{t}$ has reached mesoscopic equilibria, so that within-cluster distances are quite small, yet intra-cluster distances are still small \cite{Maggioni2018_Learning}.}
	\label{fig:Dt_SyntheticData}		
\end{figure}

\subsection{Learning by Unsupervised Nonlinear Diffusion}

In \cite{Maggioni2018_Learning}, the maximizers of $\Dt$ were proposed as cluster modes, and diffusion distances and density were used to label all other points relative to these modes.  We summarize this unsupervised learning algorithm, called \emph{learning by unsupervised nonlinear diffusion (LUND)} in Algorithm \ref{alg:LUND}.  This algorithm was proven to perfectly cluster certain data, for an appropriate choice of time parameter $t$, and is robust to non-spherical data geometries and cluster overlap. 

\begin{algorithm}[!htb]
	\caption{\label{alg:LUND}Learning by Unsupervised Nonlinear Diffusion (LUND)}
	\flushleft
	\textbf{Input:}\\
	\begin{itemize}
	\item $\{x_{i}\}_{i=1}^{n}$ (Unlabeled Data)\\
	 \item $\{(\lambda_{\ell},\psi_{\ell})\}_{\ell=1}^{M}$ (Spectral Decomposition of $P$)\\
	 \item $\{p(x_{i})\}_{i=1}^{n}$ (Empirical Density Estimate)\\
	 \item $\{\rho_{t}(x_{i})\}_{i=1}^{n}$ (\ref{eqn:rho})\\
	 \item $t$ (Time Parameter)\\
	  \end{itemize}
	
	\textbf{Output:}\\
	\begin{itemize}
	\item $\hat{K}$ (Estimated Number of Clusters)\\
	\item $Y$ (Labels)\\
	\end{itemize}
	\begin{algorithmic}[1]
	\STATE Compute $\Dt(x_{i})=p(x_{i})\rho_{t}(x_{i})$.  
	\STATE Sort the data in order of decreasing $\Dt$ value to acquire the ordering $\{x_{m_{i}}\}_{i=1}^{n}$.\STATE Estimate $\hat{K}=\argmax_{i}\left(\Dt(x_{m_{i}})/\Dt(x_{m_{i+1}})\right).$
	\FOR{$k=1:\hat{\numclust}$}
	\STATE $Y(x_{m_{k}})=k$.
	\ENDFOR
	\STATE Sort $X$ according to $p(x)$ in decreasing order as $\{x_{\ell_{i}}\}_{i=1}^{n}$.  
	\FOR{$i=1:n$}
	\IF{$Y(x_{\ell_{i}})=0$}
	\STATE $Y(x_{\ell_{i}})=Y(z_{i}), \, z_{i}=\displaystyle\argmin_{z}\{D_{t}(z,x_{\ell_{i}})\ | \ p(z)>p(x_{\ell_{i}}) \text{ and } Y(z_{i})>0\}$.
	\ENDIF
	\ENDFOR
	\end{algorithmic}
\end{algorithm}

It was shown that, depending on the well-connectedness of the clusters compared to their separations, the range of $t$ for which Algorithm \ref{alg:LUND} performs well may be large \cite{Maggioni2018_Learning}.  However, developing methods for estimating an appropriate choice of $t$ without using any labeled data is an important and only partially addressed problem.  Indeed, if the data admits hierarchical cluster structure, then several choices of $t$ may be appropriate, leading to different reasonable clusterings.  In this context, querying a small number of points for labels can disambiguate between these different clusterings.

\subsection{Learning by Active Nonlinear Diffusion}

In the active learning setting, we characterize potential classes as being composed of $D_{t}$-orbits around the maximizers of $\Dt$.  These orbits partition the data, and are comparable to elements of a Voronoi tessellation \cite{Aurenhammer1991_Voronoi}.  In the case that the labels for the data are smooth with respect to this partition, querying the maximizers of $\Dt$ is a more efficient use of a sampling budget than uniform random sampling.  The proposed algorithm, denoted \emph{Learning by Active Nonlinear Diffusion (LAND)} appears in Algorithm \ref{alg:LAND}.

\begin{algorithm}[!htb]
	\caption{\label{alg:LAND}Learning by Active Nonlinear Diffusion (LAND)}
	\flushleft
	\textbf{Input:} 
	\begin{itemize}
	\item $\{x_{i}\}_{i=1}^{n}$ (Unlabeled Data)\\
	 \item $\{(\lambda_{\ell},\psi_{\ell})\}_{\ell=1}^{M}$ (Spectral Decomposition of $P$)\\
	 \item $\{p(x_{i})\}_{i=1}^{n}$ (Kernel Density Estimate)\\
	 \item $\{\rho_{t}(x_{i})\}_{i=1}^{n}$ (\ref{eqn:rho})\\
	 \item $t$ (Time Parameter)\\
	 \item $B$ (Budget)\\
	 \item $\mathcal{O}$ (Labeling Oracle)\\
	  \end{itemize}
	\textbf{Output:} 
	\begin{itemize}
	\item $Y$ (Labels)
	\end{itemize}
	\begin{algorithmic}[1]
	\STATE Compute $\Dt(x_{i})=p(x_{i})\rho_{t}(x_{i})$.  
	\STATE Sort the data in order of decreasing $\Dt$ value to acquire the ordering $\{x_{m_{i}}\}_{i=1}^{n}$.\FOR{$i=1:B$}
	\STATE Query $\mathcal{O}$ for the label $L(x_{m_{i}})$ of $x_{m_{i}}$.
	\STATE Set $Y(x_{m_{i}})=L(x_{m_{i}})$.
	\ENDFOR
	\STATE Sort $X$ according to $p(x)$ in decreasing order as $\{x_{\ell_{i}}\}_{i=1}^{n}$.  
	\FOR{$i=1:n$}
	\IF{$Y(x_{\ell_{i}})=0$}
	\STATE $Y(x_{\ell_{i}})=Y(z_{i}), \, z_{i}=\displaystyle\argmin_{z}\{D_{t}(z,x_{\ell_{i}})\ | \ p(z)>p(x_{\ell_{i}}) \text{ and } Y(z)>0\}$.
	\ENDIF
	\ENDFOR
	\end{algorithmic}
\end{algorithm}

\subsubsection{Analysis of LAND}

From a theoretical standpoint, it is of interest to know when querying a small number of points (Algorithm \ref{alg:LAND}) offers substantial be benefit compared to unsupervised learning (Algorithm \ref{alg:LUND}).  Suppose that the underlying data consists of distinct classes $X=\bigcup_{k=1}^{\numclust} X_{k}$, with all points in $X_{k}$ having label $k$.  Let \begin{align*}\Din&=\max_{k=1,\dots,\numclust}\max_{x,y\in X_{k}}D_{t}(x,y),\\ 
\Dbtw&=\min_{k\neq k'}\min_{x\in X_{k},y\in X_{k'}}D_{t}(x,y)\end{align*} be the maximum within-class and minimum between-class diffusion distances at time $t$, respectively.    Let $$\mathcal{M}=\{x\in X \ | \ \exists k \text{ such that }  x = \argmax_{y\in X_{k}}p(y)\},$$ $\max(\M)=\max_{x\in\M}p(x), \min(\M)=\min_{x\in\M}p(x)$ be the density maximizers of the distinct classes, the maximum density among such classwise maximizers, and the minimum density among the classwise maximizers, respectively.  In \cite{Maggioni2018_Learning}, it is shown that if \begin{align}\label{eqn:LUND_Condition}\Din/\Dbtw<\max(\M)/\min(\M),\end{align}then the data can be labeled in a fully unsupervised manner by Algorithm \ref{alg:LUND}.  However, the underlying density conditions may not be satisfied in practice, particularly if there are strong discrepancies between the density of the most dense point in each cluster.  Moreover, (\ref{eqn:LUND_Condition}) depends strongly on $t$.  Introducing the active learning scheme allows to bypass this potentially stringent density condition and still achieve perfect accuracy, at the cost of querying the labels of a small number of points.

\begin{thm}\label{thm:AccuracyGuarantee}Let $X=\bigcup_{k=1}^{\numclust}$ be data to classify.  Suppose that $\Din<\Dbtw$, and that the $B$ maximizers of $\Dt$ include the elements of $\M$.  Then LAND with a budget of size $B$ achieves perfect classification accuracy.
\end{thm}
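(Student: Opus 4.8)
The plan is to argue, by a density-ordered induction, that the label-propagation phase of LAND never crosses a class boundary once every class density maximizer has been correctly labeled by the oracle. I would first record the effect of the querying phase. By hypothesis the $B$ maximizers of $\Dt$ contain $\M$, so every classwise density maximizer $x_k^\ast = \argmax_{y\in X_k} p(y)$ is queried and receives its true label $k$ from the oracle $\mathcal{O}$. Thus before propagation begins the set of labeled points is entirely correct and contains at least one representative of each of the $\numclust$ classes.

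Next I would isolate the geometric separation furnished by the hypothesis $\Din<\Dbtw$. For any $x\in X_k$ and any same-class point $y\in X_k$ we have $D_t(x,y)\le \Din$, whereas for any $y'\in X_{k'}$ with $k'\neq k$ we have $D_t(x,y')\ge \Dbtw>\Din$. Hence every point of $X_k$ is strictly diffusion-closer to $x$ than every point lying outside $X_k$; this is the single structural fact that drives the proof.

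The core is an induction over the propagation loop, which visits the unlabeled points in order of decreasing density. The invariant is that at every stage all labeled points carry their true labels; this holds initially by the querying step. For the inductive step, let $x=x_{\ell_i}\in X_k$ be the next unlabeled point. Since $x_k^\ast$ is already labeled, $x\neq x_k^\ast$, and so $p(x_k^\ast)>p(x)$, meaning $x_k^\ast$ is an admissible candidate in the minimization defining $z_i$. Because $D_t(x,x_k^\ast)\le\Din$ while any labeled candidate in a different class is at distance $\ge\Dbtw>\Din$, the minimizer $z_i$ must lie in $X_k$. By the induction hypothesis $z_i$ is correctly labeled, so $Y(z_i)=k$ and $x$ is assigned its true label, preserving the invariant. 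When the loop terminates every point has been labeled correctly, which is the claim.

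The step I expect to require the most care is the inductive step, specifically guaranteeing that a correctly-labeled, strictly-higher-density, same-class candidate is always present when $x$ is processed. This rests on two ingredients: (i) the classwise maximizer being labeled from the outset via $\M\subseteq\{x_{m_1},\dots,x_{m_B}\}$, and (ii) a clean treatment of the density ordering ensuring $p(x_k^\ast)>p(x)$ is strict. The latter presumes $\argmax_{y\in X_k}p(y)$ is a unique strict maximizer (equivalently, that the densities are distinct, or that ties in $p$ are broken consistently); without this the constraint $p(z)>p(x)$ could fail to admit $x_k^\ast$, and the availability of a valid same-class candidate would have to be re-established. A minor accompanying point is that the hypothesis forces $|\M|=\numclust\le B$, so the budget genuinely suffices to reach every class maximizer.
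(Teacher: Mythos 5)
Your proof is correct and follows essentially the same route as the paper's: the queries correctly label every classwise density maximizer, and then an induction over the propagation loop in decreasing-density order, using $\Din<\Dbtw$, shows that each point's nearest higher-density labeled neighbor lies in its own class. Your explicit attention to the strict constraint $p(z)>p(x)$ (the paper's proof only invokes $p(x^{*})\ge p(x)$, quietly presuming unique classwise maximizers or consistent tie-breaking) is a small refinement of a point the paper glosses over, but it does not change the argument's structure.
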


\begin{proof}

If the $B$ maximizers of $\Dt$ include all the density maximizers of the distinct classes, that is, the elements of $\mathcal{M}$, then the LAND queries guarantee these points are all labeled correctly.  Then the result follows by induction on the data points sorted in order of decreasing $p(x)$ value.  Indeed, for an unlabeled point $x\in X_{k}$, its nearest diffusion neighbor of higher density, $x^{*}$, must be in the same class $X_{k}$, since $\Din<\Dbtw$.  Moreover, that point is already labeled as $Y(x^{*})=k$, since $p(x^{*})\ge p(x)$.  Hence, $Y(x)=k$.  

\end{proof}

Theorem \ref{thm:AccuracyGuarantee} asserts that the LAND algorithm achieves perfect accuracy as long as $\Din<\Dbtw$ and $B$ is large enough so that all elements of $\M$ are among the $B$ maximizers of $\Dt$.  Compared to LUND, LAND does not require that $\Din/\Dbtw<\max(\M)/\min(\M)$ to guarantee strong performance.  This is an important point in practice, since the density between different regions of the data may vary considerably.  Ultimately, active learning is most useful when the budget $B$ may be taken very small compared to $n$; we shown in Section \ref{sec:Experiments} that even a budget of just a few points may significantly improve accuracy on synthetic and real datasets.

\subsection{Comparison with Related Methods}  It is natural to compare LAND with related cluster-based active learning methods, as well as its unsupervised variant LUND.

\subsubsection{Comparisons with Related Active Learning Methods}

As discussed in Section \ref{subsec:BackgroundAL}, active learning methods may be categorized as falling into two broad classes: those based on refining the hypothesis space of classifiers, and those based on exploiting cluster structure in the data.  LAND falls into the second category; it is thus natural to compare it with existing cluster-based active learning algorithms.  

Many active learning algorithms that exploit cluster structure in the data proceed by constructing a hierarchical clustering on the data, often represented in the form of a dendrogram \cite{Friedman2001}.  Given such a structure, sample queries are made in order to explore heterogeneous regions of the tree (leaves with highly mixed labels) and to avoid sampling from homogeneous regions of the data (leaves that consist mostly of a single class).  The key challenge is to balance the cost of exploring ambiguous regions of the data with establishing the homogeneity of other regions.  

Efficient algorithms that are statistically consistent have been proposed \cite{Dasgupta2008_Hierarchical} and analyzed using the notion of ``probabilistic Lipschitzness," which quantifies purity of leaves of the hierarchical clustering \cite{Urner2013_PLAL}.  These approaches make analyzing the hierarchical tree the central problem; the problem of whether or not a particular method for constructing a hierarchical tree is appropriate or not is not directly considered.  Indeed, it is common to construct the underlying hierarchical tree with standard methods, for example average-linkage clustering \cite{Dasgupta2008_Hierarchical} or single linkage clustering \cite{Friedman2001}.  Despute their pervasiveness, these methods for constructing hierarchical trees suffer from a lack of robustness to pernicious chains in the data (single-linkage) and geometric distortion (average-linkage).  Active learning based on hierarchical trees performs well when the leaves of the tree become pure quickly when descending from the root node; if the underlying tree does not exhibit pure leaves until relatively deep in the tree, many samples are required for active learning, and the method may not improve substantially over random sampling.  

Unlike average linkage and single linkage clustering, the proposed LAND method explicitly incorporates the underlying geometry of the data to construct clusters of multiscale granularity, which can then be exploited for active querying.  The LAND algorithm may be interpreted as a method for constructing the underlying hierarchical tree, which has the desirable property that the leaves are essentially robust to geometric transformations of the underlying clusters, i.e. to making the clusters elongated or nonlinear.  Indeed, given a number of clusters $\numclust$, one can run a variant of LUND in which $\numclust$ is input as a parameter; see Algorithm \ref{alg:LUND_K_Known}.  

\begin{algorithm}[!htb]
	\caption{\label{alg:LUND_K_Known}LUND, $\numclust$ Known}
	\flushleft
	\textbf{Input:}\\
	\begin{itemize}
	\item $\{x_{i}\}_{i=1}^{n}$ (Unlabeled Data)\\
	 \item $\{(\lambda_{\ell},\psi_{\ell})\}_{\ell=1}^{M}$ (Spectral Decomposition of $P$)\\
	 \item $\{p(x_{i})\}_{i=1}^{n}$ (Empirical Density Estimate)\\
	 \item $\{\rho_{t}(x_{i})\}_{i=1}^{n}$ (\ref{eqn:rho})\\
	 \item $t$ (Time Parameter)\\
	 \item $K$ (Number of Clusters)\\
	  \end{itemize}
	
	\textbf{Output:}\\
	\begin{itemize}
	\item $Y$ (Labels)\\
	\end{itemize}
	\begin{algorithmic}[1]
	\STATE Compute $\Dt(x_{i})=p(x_{i})\rho_{t}(x_{i})$.  
	\STATE Sort the data in order of decreasing $\Dt$ value to acquire the ordering $\{x_{m_{i}}\}_{i=1}^{n}$.\FOR{$k=1:\numclust$}
	\STATE $Y(x_{m_{k}})=k$.
	\ENDFOR
	\STATE Sort $X$ according to $p(x)$ in decreasing order as $\{x_{\ell_{i}}\}_{i=1}^{n}$.  
	\FOR{$i=1:n$}
	\IF{$Y(x_{\ell_{i}})=0$}
	\STATE $Y(x_{\ell_{i}})=Y(z_{i}), \, z_{i}=\displaystyle\argmin_{z}\{D_{t}(z,x_{\ell_{i}})\ | \ p(z)>p(x_{\ell_{i}}) \text{ and } Y(z)>0\}$.
	\ENDIF
	\ENDFOR
	\end{algorithmic}

\end{algorithm}

It is then natural to compare the purity of the nodes of a hierarchical tree at scale $\numclust$, with the purity of the clusters learned by Algorithm \ref{alg:LUND_K_Known} with number of clusters equal to $\numclust$.  More generally, let $\mathcal{C}=\{C_{k}\}_{k=1}^{\numclust}$ be a clustering of labeled data $\{(x_{i},y_{i})\}_{i=1}^{n}$.  Let $\bar{y}_{k}$ be the most common label among the points in $C_{k}$.  The \emph{purity of the clustering} $\mathcal{C}$ is defined as $$\mathcal{P}(\mathcal{C})=\frac{1}{n}\sum_{k=1}^{\numclust}|\{x_{i}\in C_{k}\ | \ y_{i}=\bar{y}_{k}\}|.$$Given a hierarchical clustering $\{\mathcal{C}_{\ell}\}_{\ell=1}^{n}$---that is, $\mathcal{C}_{1}$  consists of 1 cluster with all points, $\mathcal{C}_{n}$ consists of $n$ singleton clusters, and $\mathcal{C}_{\ell+1}$ is the same clustering as $\mathcal{C}_{\ell}$, but with two of the clusters split--- the purity of the clustering at the $\ell^{th}$ scale is $\mathcal{P}(\mathcal{C}_{\ell})$.  Clearly $\mathcal{P}(\mathcal{C}_{\ell})$ is non-decreasing as a function of $\ell$, and $\mathcal{P}(\mathcal{C}_{n})=1.$  If the growth of $\mathcal{P}(\mathcal{C}_{\ell})$ towards 1 is rapid in $\ell$, then an active sampler does not need to search deeply into the tree to find regions with homogeneous labels.  In Figure \ref{fig:HierarchicalPurities}, a plot of $\mathcal{P}(\mathcal{C}_{\ell})$ is shown for three synthetic datasets with three different families of clusterings: single linkage clusters, average linkage clusters, and the clusters learned by Algorithm \ref{alg:LUND_K_Known}.

\begin{figure}[!htb]
	\centering
	\begin{subfigure}[t]{.32\textwidth}
		\includegraphics[width=\textwidth]{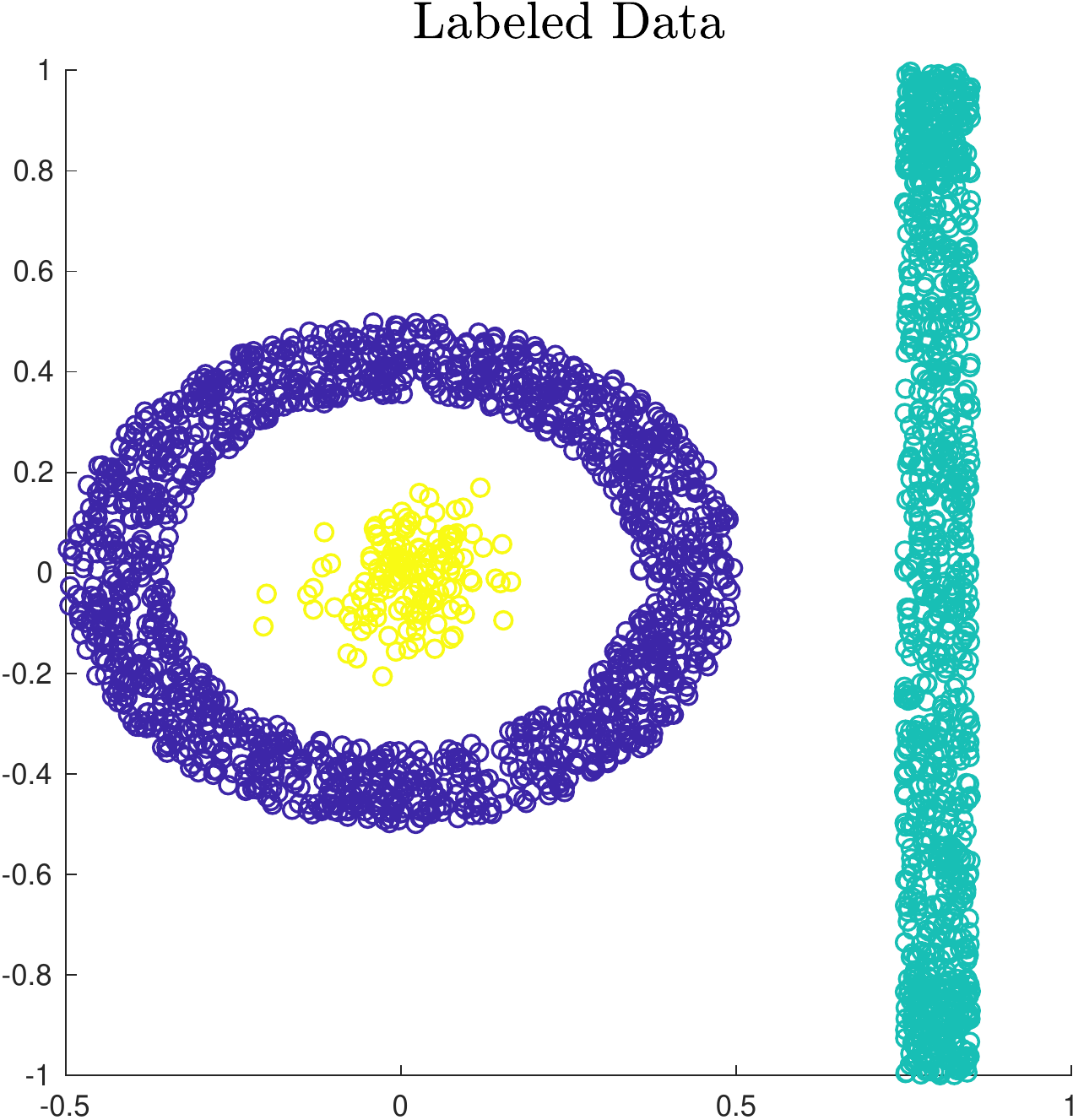}
		\subcaption{Geometric data}
	\end{subfigure}
	\begin{subfigure}[t]{.32\textwidth}
		\includegraphics[width=\textwidth]{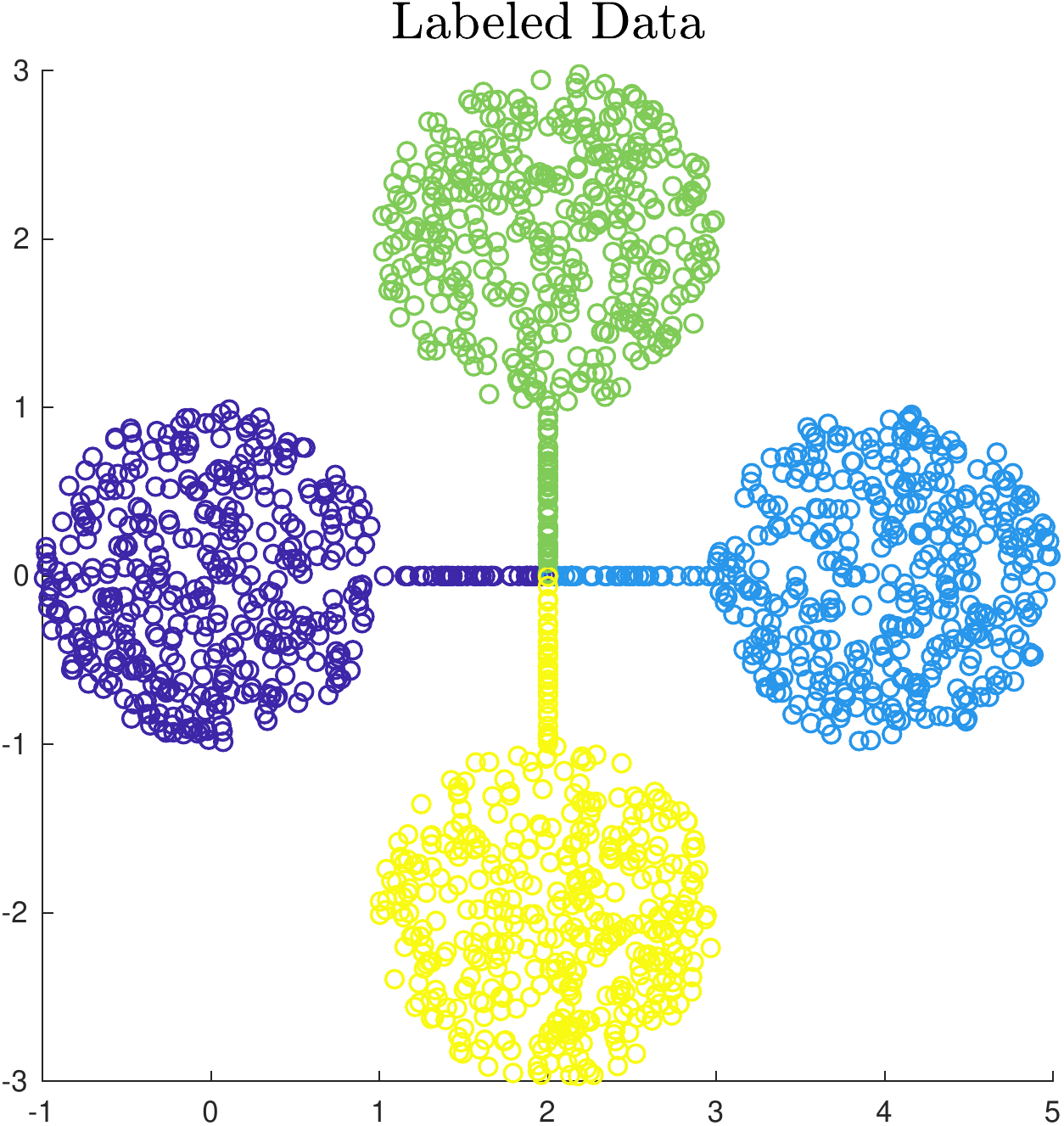}
		\subcaption{Bottleneck data}
	\end{subfigure}
	\begin{subfigure}[t]{.32\textwidth}
		\includegraphics[width=\textwidth]{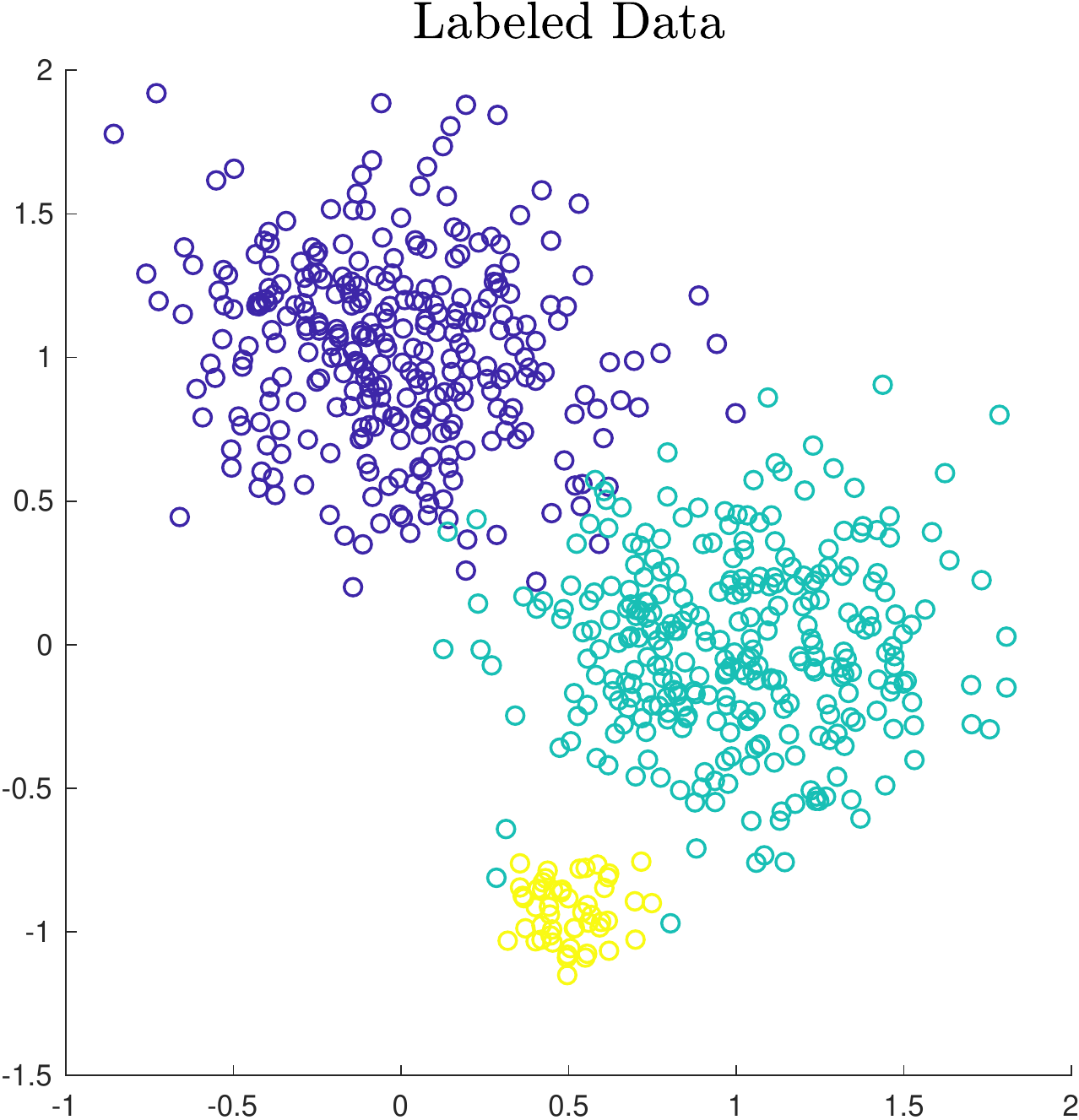}
		\subcaption{Gaussian data}
	\end{subfigure}
	\begin{subfigure}[t]{.32\textwidth}
		\includegraphics[width=\textwidth]{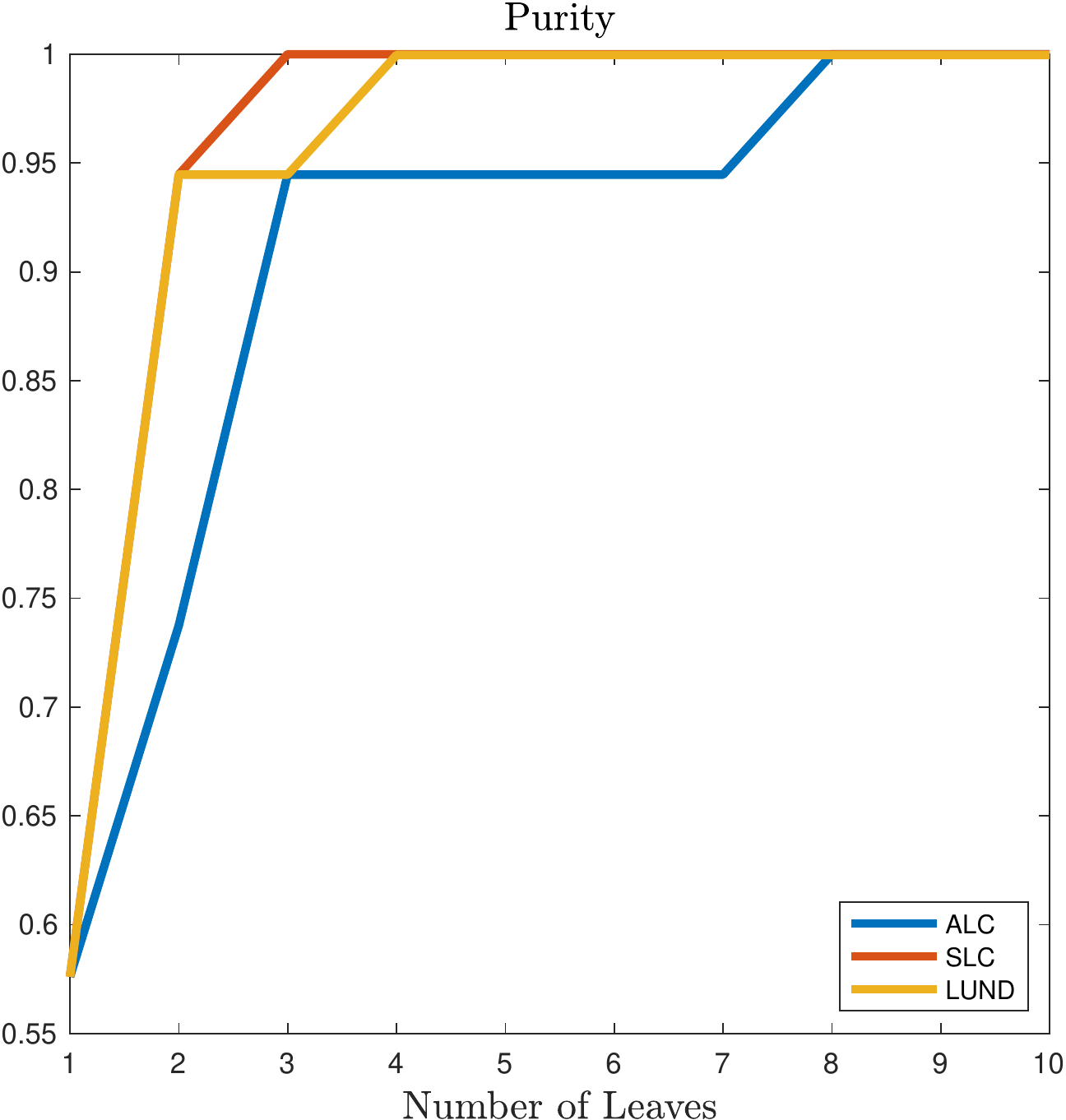}
		\subcaption{Hierarchical purity of geometric data}
	\end{subfigure}
	\begin{subfigure}[t]{.32\textwidth}
		\includegraphics[width=\textwidth]{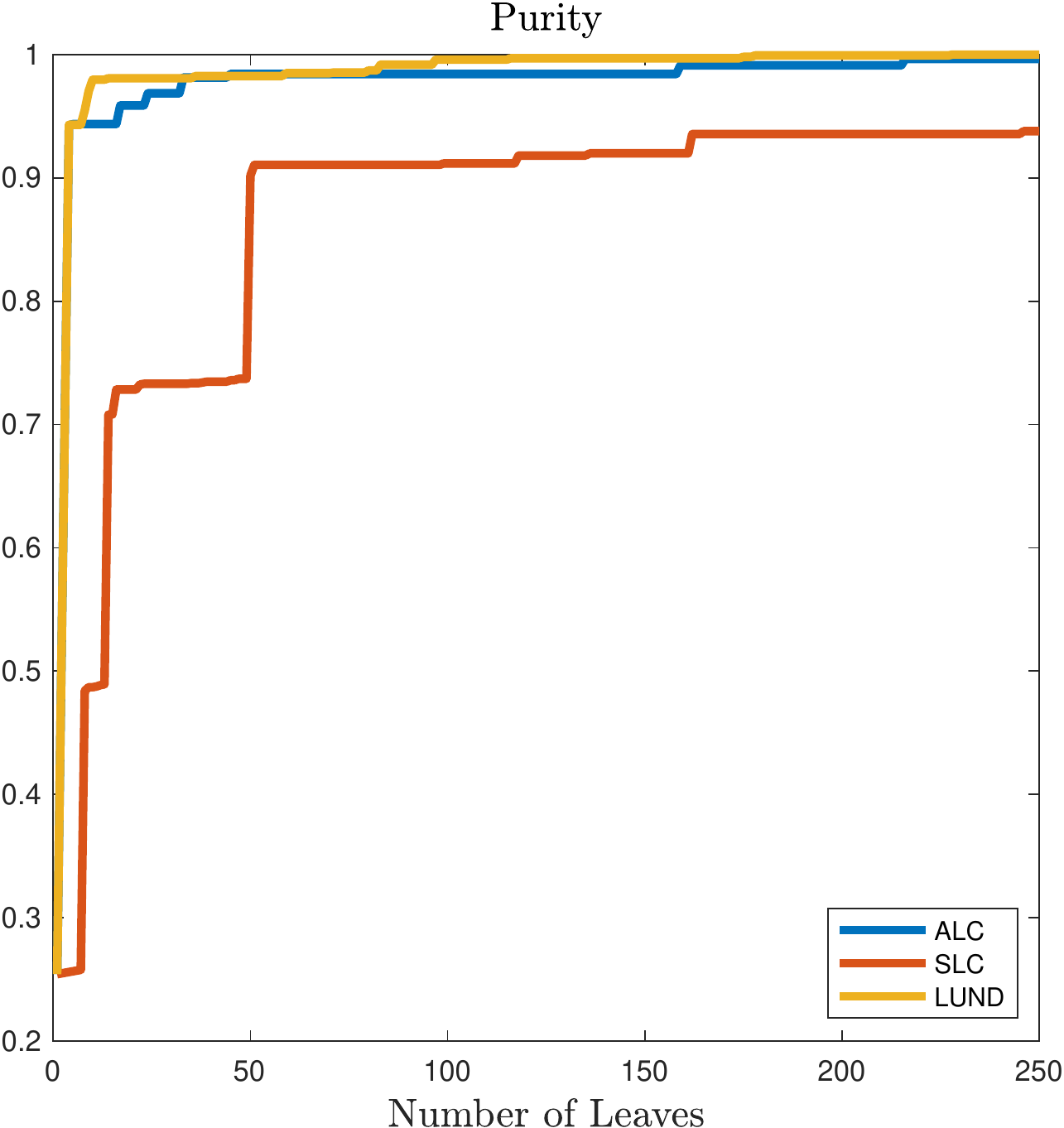}
		\subcaption{Hierarchical purity of bottleneck data}
	\end{subfigure}
	\begin{subfigure}[t]{.32\textwidth}
		\includegraphics[width=\textwidth]{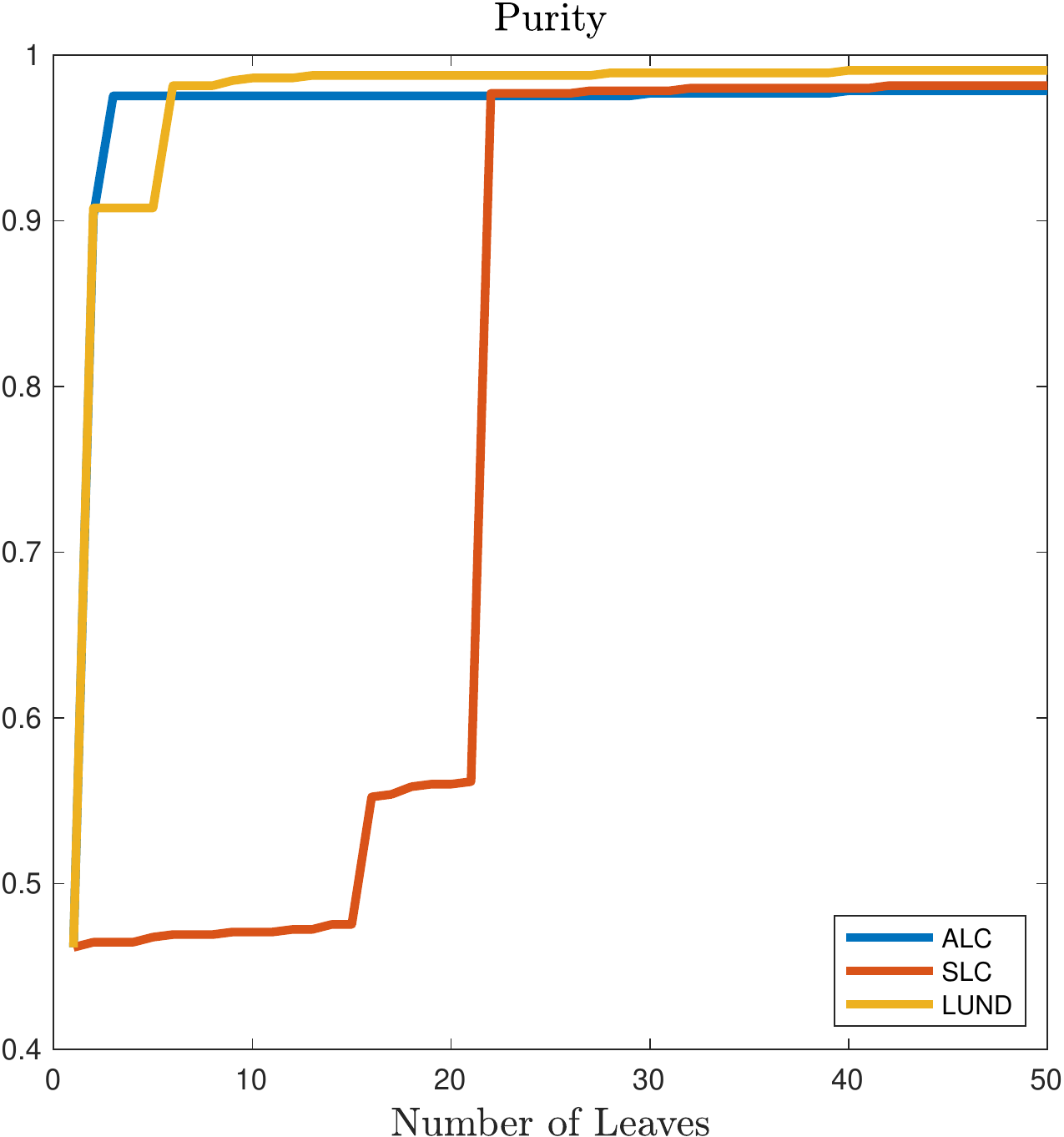}
		\subcaption{Hierarchical purity of Gaussian data}
	\end{subfigure}
		\caption{\emph{Top row}:  Three different synthetic datasets in two dimensions are shown, categorized as geometric, bottleneck and Gaussian.  \emph{Bottom row:} Plots of node purity for three different multiscale, hierarchical methods of clustering: average linkage clustering (ALC), single linkage clustering (SLC) and learning by unsupervised nonlinear diffusion (LUND).  As the number of leaves/clusters increases, purity is non-decreasing.  The purity of the LUND clusters converges more rapidly to the optimal value 1, indicating that high accuracy can be gained by correctly labeling a smaller number of clusters, compared to ALC and SLC.}
	\label{fig:HierarchicalPurities}		
\end{figure}

We see that for the geometric data, the clusters learned from average linkage clustering achieve high purity much later than the clusterings learned with single linkage clustering.  This is due to the inability of average linkage to account for the nonlinear and elongated shapes of these clusters.  Indeed, the opposite ends of the elongated cluster are quite far apart when measured with the average linkage metric, but are much closer when diffusion distances are used.  On the other hand, the bottleneck and Gaussian data illustrate how single linkage clusters may take a long time to achieve high purity, due to the fact that single linkage clustering is guided only by density, and is not robust to adversarial paths of points connecting two otherwise well-separated clusters.  Compared to single linkage and average linkage clustering, the clusters learned by LUND are robust to geometric distortions, adversarial paths, and noise.  

\subsubsection{Comparison with LUND}
\label{subsubsec:ComparisonLUND}

The proposed LAND algorithm (Algorithm \ref{alg:LAND}) integrates an active learning criterion into the LUND algorithm (Algorithm \ref{alg:LUND}).  It has been shown that when the the classes of the data $X$ are sufficiently coherent and pairwise well-separated, LUND with a good choice of $t$ perfectly labels all data points \cite{Maggioni2018_Learning}.  The unsupervised LUND algorithm depends critically on $t$, and the robustness of LUND to this choice of parameter suggests its usefulness.  However, developing practical methods for estimating a good choice of $t$ may be challenging in data that admits hierarchical cluster structure.  Indeed, consider the data in Figure \ref{fig:HierarchicalClusteredData}.  For this data, it is ambiguous whether there are two or four clusters.  Indeed, as shown in Figure \ref{fig:HierarchicalClusteredData} (c), if $\log_{10}(t)\in [1,3]$, LUND estimates there are 4 clusters.  If the time parameter satisfies $\log_{10}(t)\in [4,6]$, LUND estimates there are 2 clusters.  This is a fundamental ambiguity in unsupervised clustering, and one can view the ability of hierarchical clustering algorithms, and of LUND (depending on the time scale $t$) to detect the different possibilities for the number of clusters as a strength. Partial supervision allows for disambiguation in these situations.

\begin{figure}[!htb]
	\centering
	\begin{subfigure}[t]{.32\textwidth}
		\includegraphics[width=\textwidth]{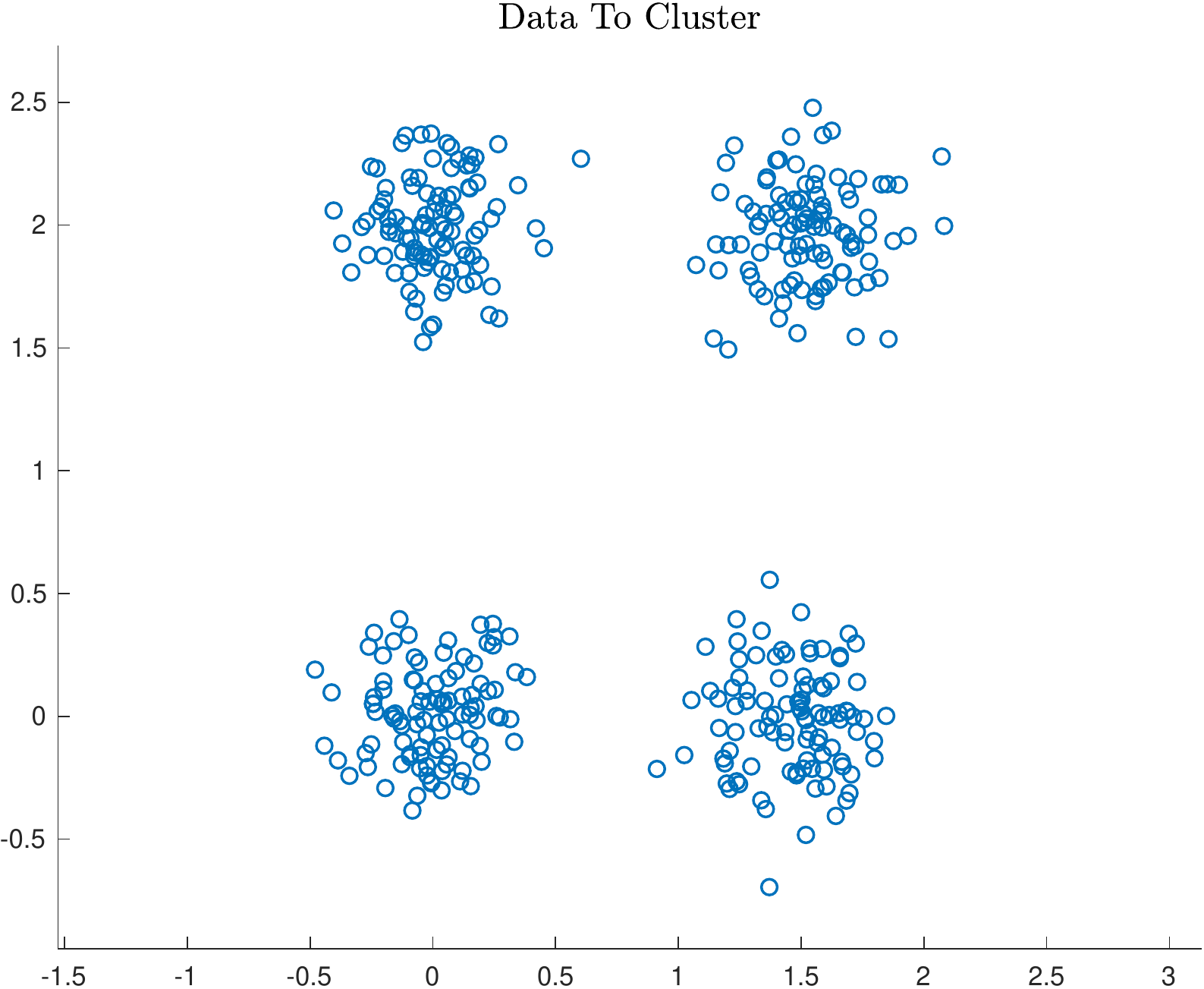}
		\subcaption{Hierarchical clustered data}
	\end{subfigure}
	\begin{subfigure}[t]{.32\textwidth}
		\includegraphics[width=\textwidth]{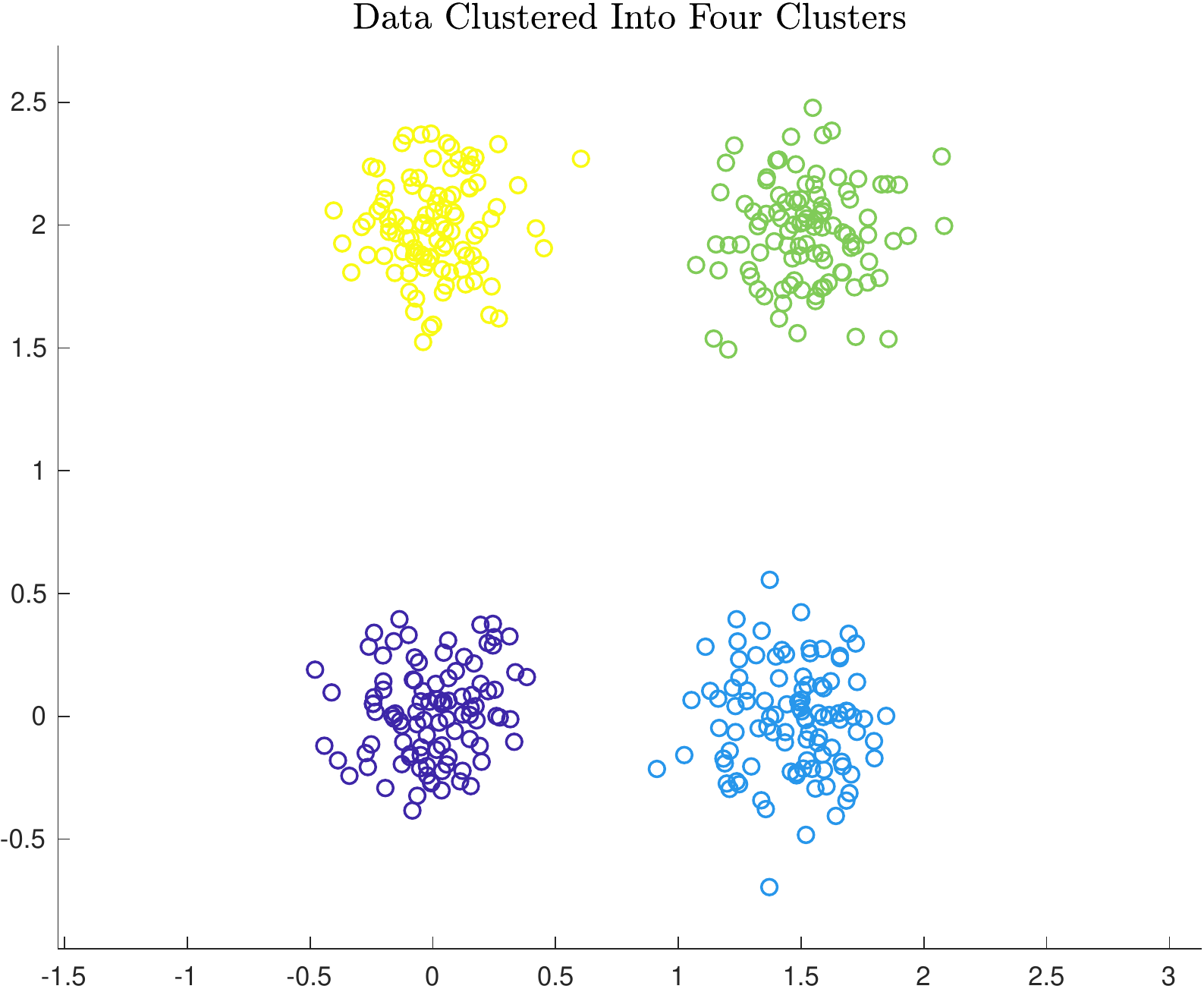}
		\subcaption{Labels with four classes}
	\end{subfigure}
	\begin{subfigure}[t]{.32\textwidth}
		\includegraphics[width=\textwidth]{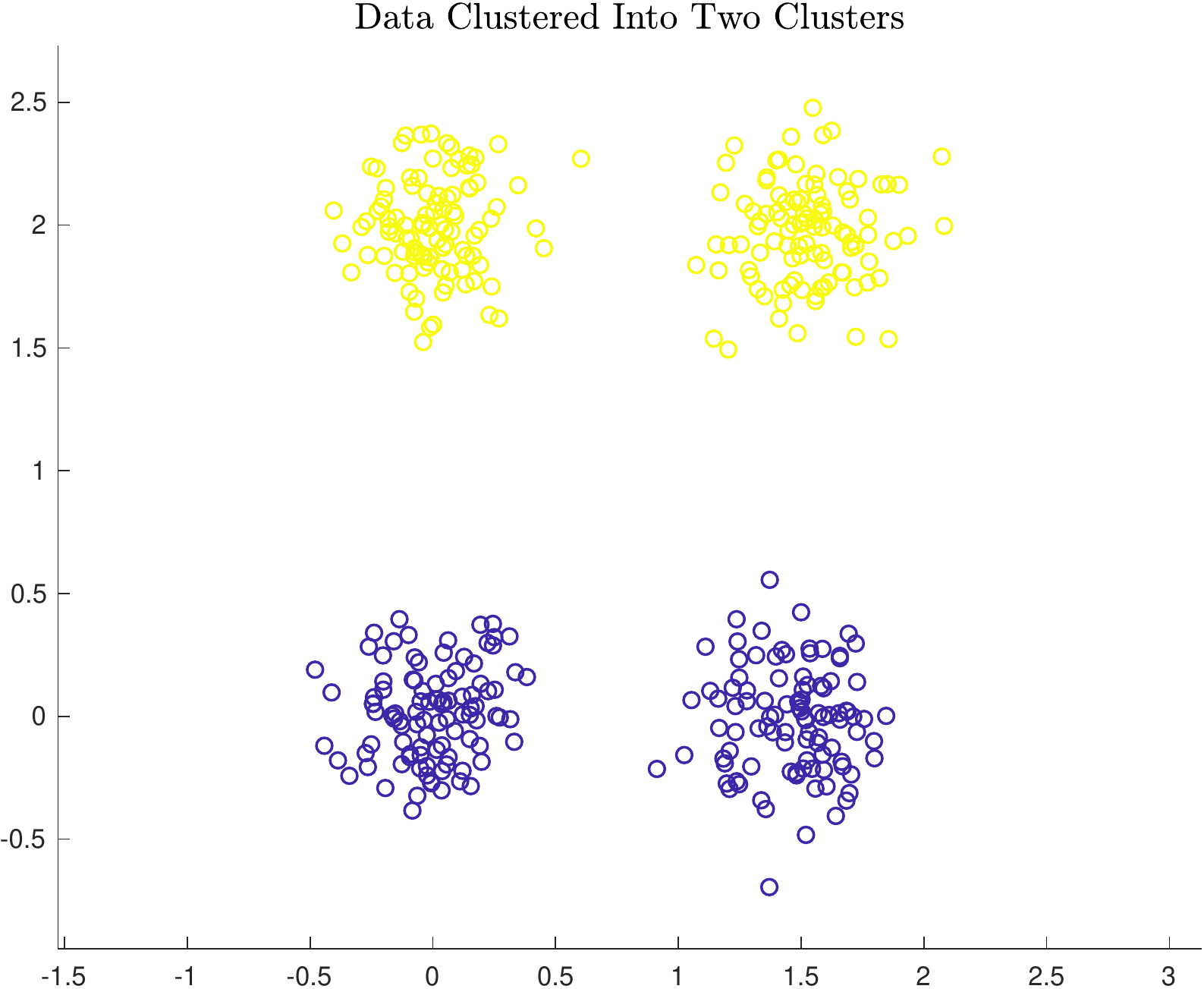}
		\subcaption{Labels with two classes}
	\end{subfigure}
	\caption{In (a), data with natural hierarchical structure is exhibited.  The four Gaussians have means $(0,0), (0,2), \left(\frac{3}{2},0\right), \left(\frac{3}{2},2\right)$.  While at one level of granularity there are 4 clusters (shown in (b)), at a coarser level of granularity the top 2 and bottom 2 Gaussians form clusters, leading to a clustering with only 2 clusters (shown in (c)).\label{fig:HierarchicalClusteredData}}		
\end{figure}

\begin{figure}[!htb]
	\centering
	\begin{subfigure}[t]{.32\textwidth}
		\includegraphics[width=\textwidth]{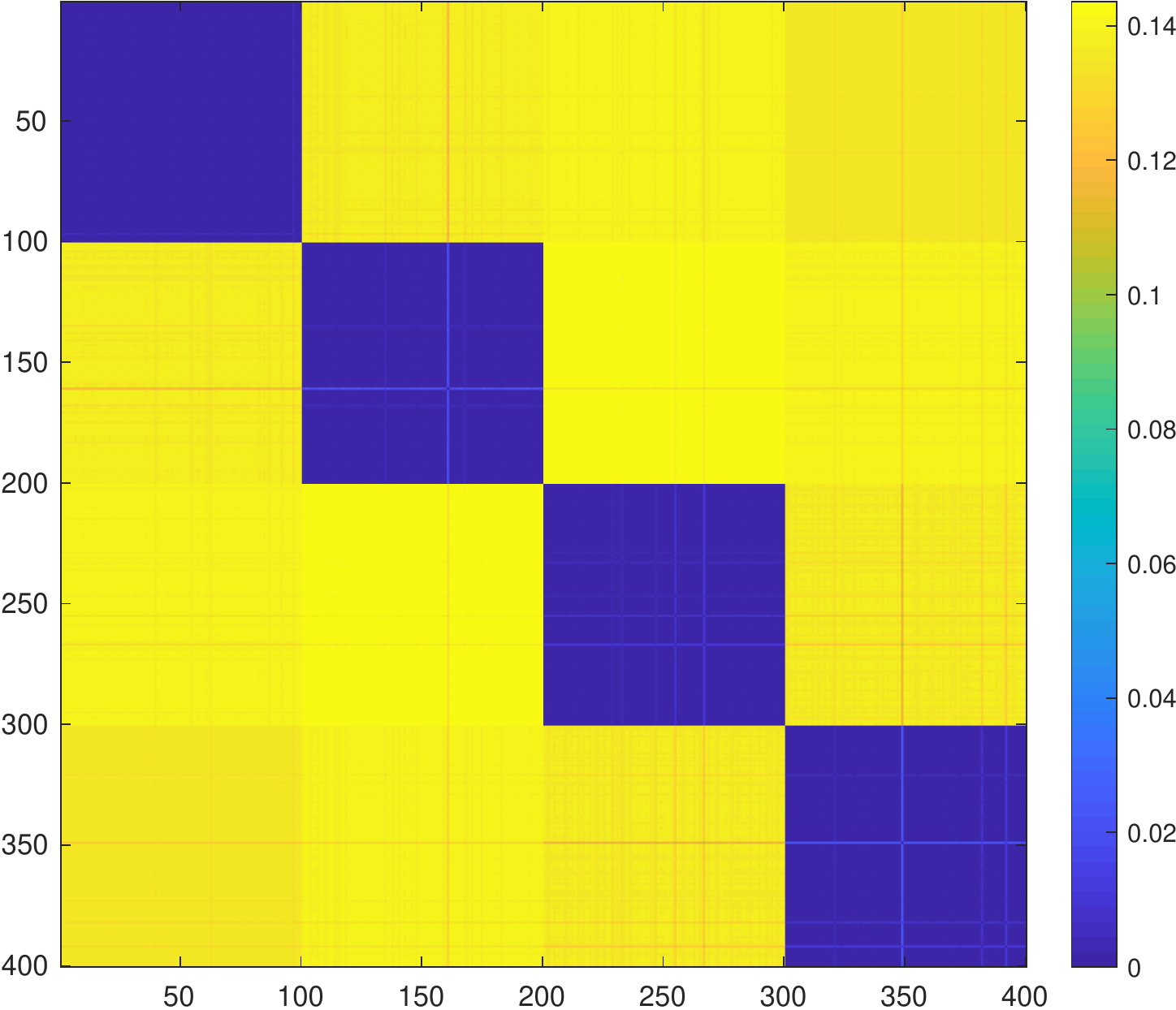}
		\subcaption{Diffusion distance matrix, $\log_{10}(t)=1.5$}
	\end{subfigure}
	\begin{subfigure}[t]{.32\textwidth}
		\includegraphics[width=\textwidth]{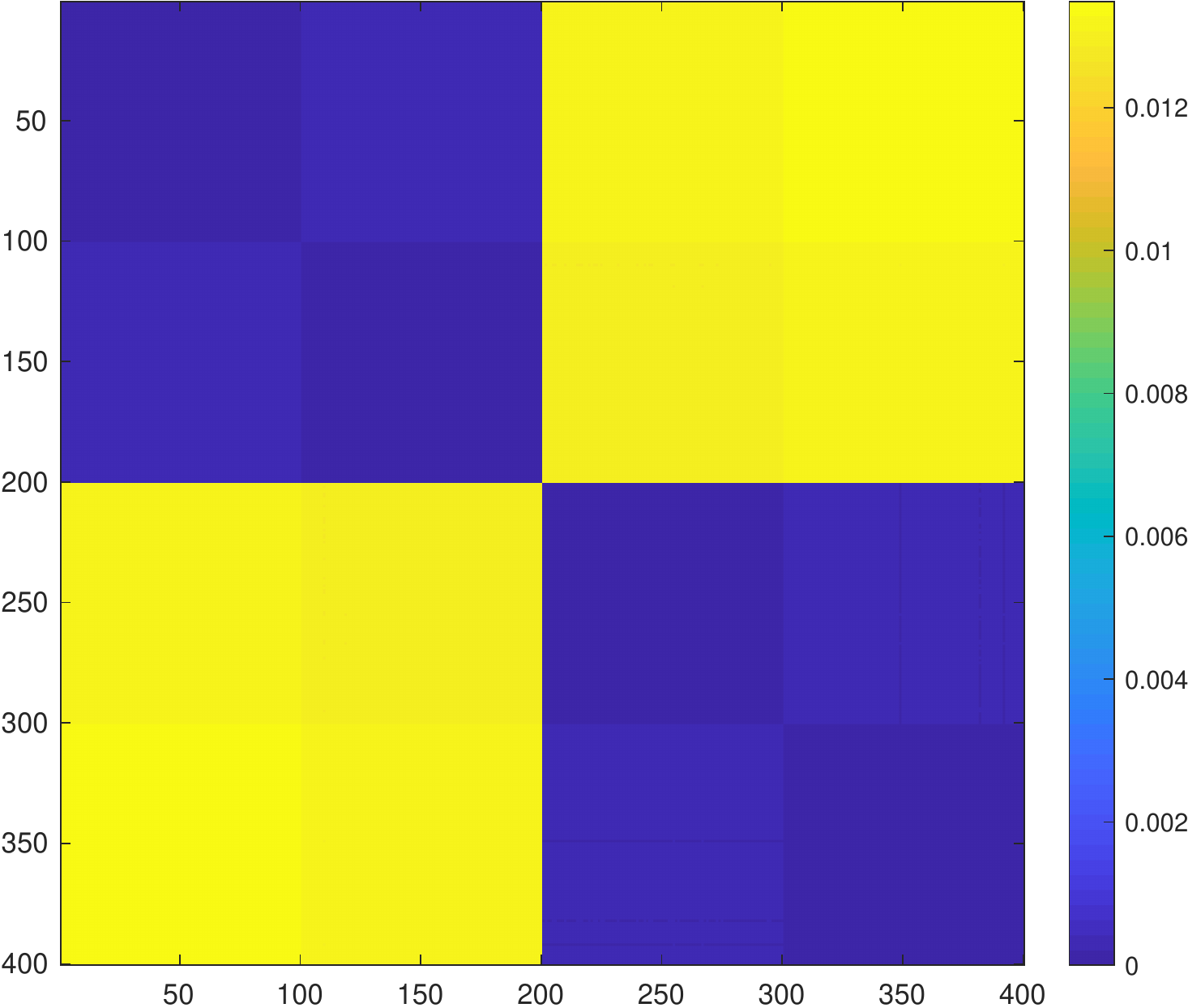}
		\subcaption{Diffusion distance matrix, $\log_{10}(t)=5$}
	\end{subfigure}
	\begin{subfigure}[t]{.32\textwidth}
		\includegraphics[width=\textwidth]{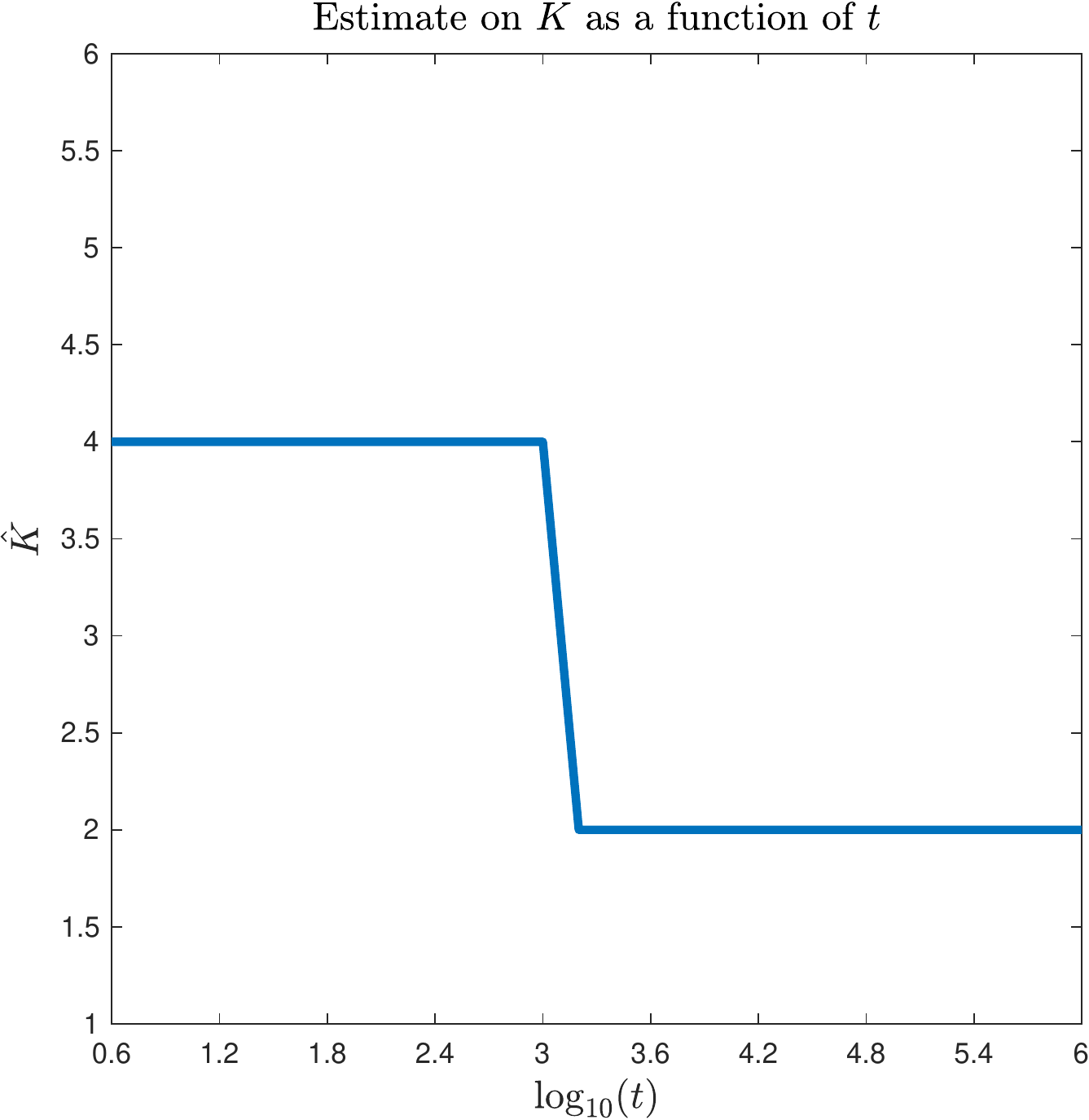}
		\subcaption{Estimates on $K$}
	\end{subfigure}
	\caption{The matrix of pairwise diffusion distances for $\log_{10}(t)=1.5$ and $\log_{10}(t)=5$ are shown in (a) and (b), respectively, illustrating the hierarchical cluster structure in the data.  This hierarchical structure introduces ambiguities into the estimation of the number of clusters $K$ in LUND, as shown (c).  For small time, $\hat{K}=4$, while for larger time $\hat{K}=2$.\label{fig:HierarchicalClusteredData_LUND}}
\end{figure}

Indeed, with a very small (4) labeled queries, LAND is able to overcome this obstacle and determine the labels of the data.  This is because even for large $t$, the top four values of $\Dt$ correspond to the modes of the four Gaussian clusters, and the diffusion distances within these clusters are quite small.  In the unsupervised case, for large $t$, the gap between the within-cluster and between-cluster distances for the four clusters are dwarfed by the the gap between the within-cluster and between-cluster distances for the two clusters, leading to ambiguity.  That is, when the underlying data is grouped into 2 clusters, $\Din/\Dbtw$ is large for large $t$ and small for small $t$; when the underlying data is grouped into 4 clusters, $\Din/\Dbtw$ is large for small $t$ and small for large $t$.  These lead to inherent ambiguity in how to choose $t$ in a fully unsupervised manner.  However, by bringing in just 4 labels, LAND is able to correctly label the dataset for both large and small $t$ values, as can be seen from Figure \ref{fig:HierarchicalClusteredData_LAND}.  In this sense, LAND introduces \emph{robustness to the time parameter} that may be problematic in LUND, at the cost of querying a small number of points.

\begin{figure}[!htb]
	\centering
	\begin{subfigure}[t]{.49\textwidth}
		\includegraphics[width=\textwidth]{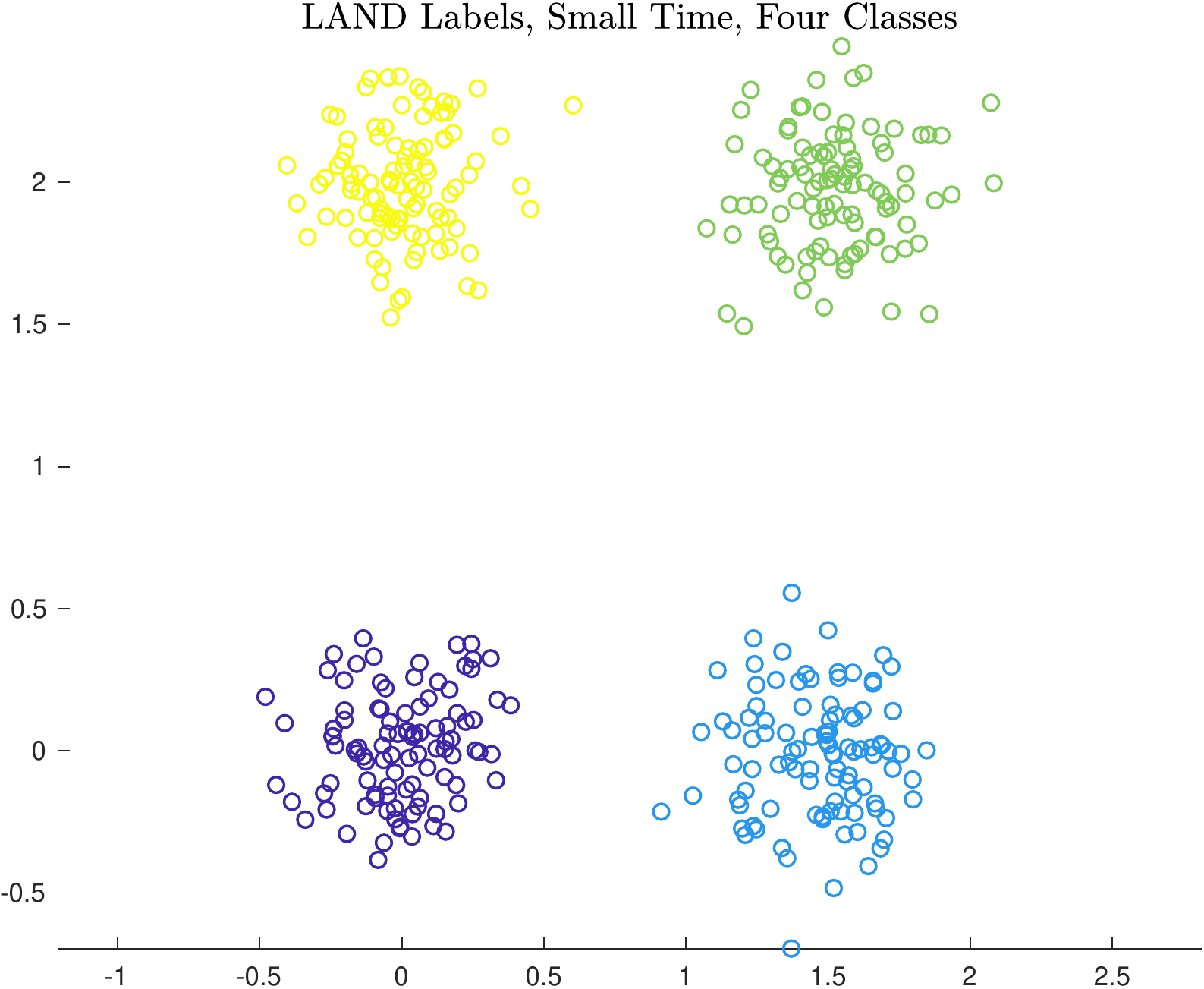}
		\subcaption{LAND, $\log_{10}(t)=2$, Four Clusters}
	\end{subfigure}
		\begin{subfigure}[t]{.49\textwidth}
		\includegraphics[width=\textwidth]{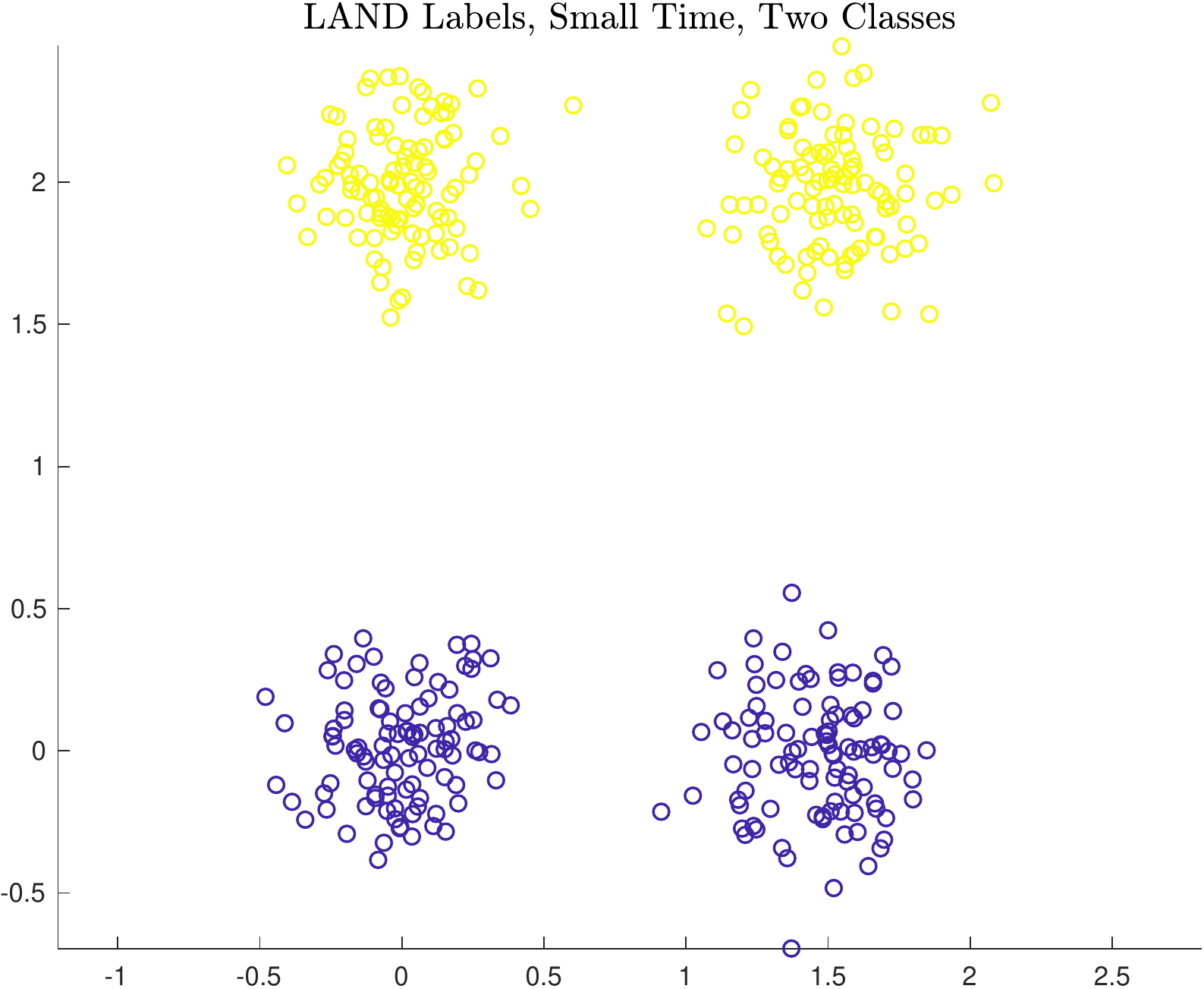}
		\subcaption{LAND, $\log_{10}(t)=2$, Two Clusters}
	\end{subfigure}
		\begin{subfigure}[t]{.49\textwidth}
		\includegraphics[width=\textwidth]{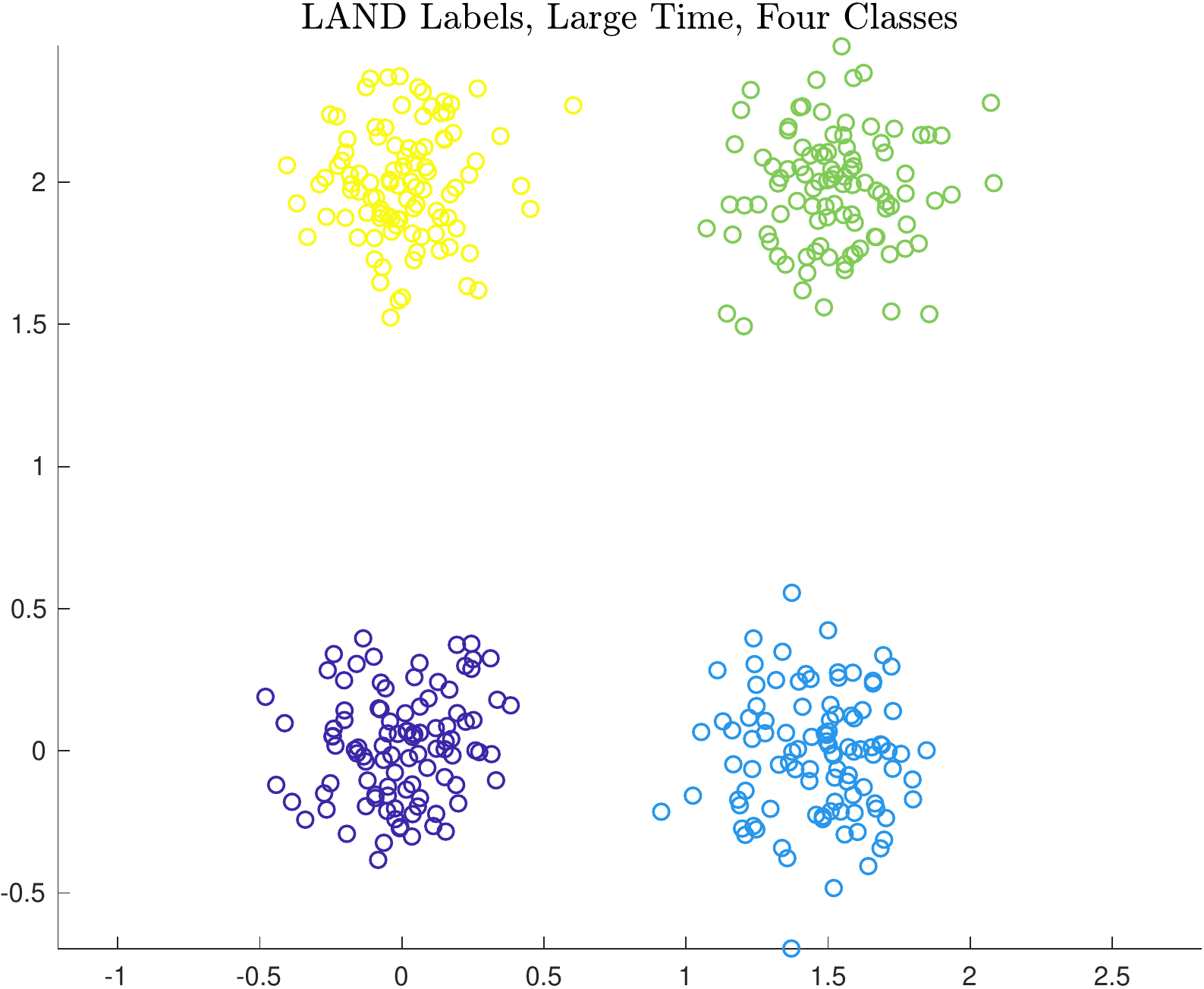}
		\subcaption{LAND, $\log_{10}(t)=5$, Four Clusters}
	\end{subfigure}
		\begin{subfigure}[t]{.49\textwidth}
		\includegraphics[width=\textwidth]{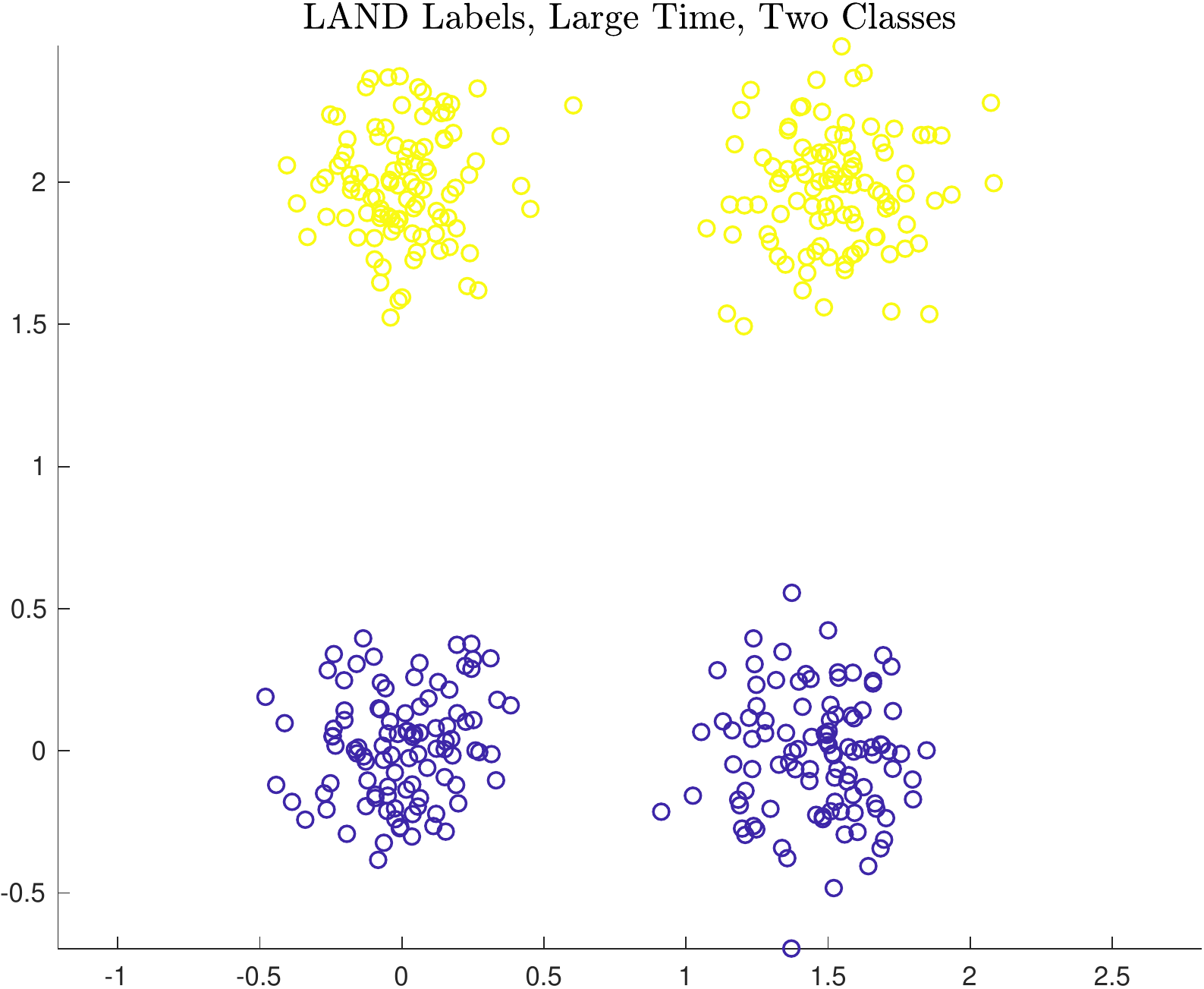}
		\subcaption{LAND, $\log_{10}(t)=5$, Two Clusters}
	\end{subfigure}
	
		\caption{LAND labelings of the data with four queries under two different scenarios: small diffusion time (top row) and large diffusion time (bottom row), and four latent clusters (first column) and two latent clusters (second column).  The clusters are closer in the horizontal direction than in the vertical direction, from whence the hierarchical structure is derived.  In all cases, LAND is able to to correctly label the data with just four queries, one for each Gaussian.\label{fig:HierarchicalClusteredData_LAND}}
\end{figure}

\subsection{Computational Complexity and Implementation}

The proposed Algorithm \ref{alg:LAND} has computational complexity depending crucially on the number of data points to label ($n$), the ambient dimensionality of the data ($D$), and the intrinsic dimensionality of the data ($d$).  

\begin{thm}Let $\{x_{i}\}_{i=1}^{n}\subset\mathbb{R}^{D}$ be data to label.  Suppose all except for $O(\log(n))$ points have a higher density point within its $O(\log(n))$ $D_{t}$-nearest neighbors.  In the case that a $\kNN$-sparse matrix $P$ is used, the LAND algorithm has complexity $O(C_{\text{NN}}+n\kNN+n\log(n)))$, where $C_{\text{NN}}$ is the cost of computing all $\kNN$ nearest neighbors.  
\end{thm}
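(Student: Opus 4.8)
The plan is to bound the cost of each stage of the pipeline underlying Algorithm \ref{alg:LAND} separately and show that the dominant contributions are the nearest-neighbor search, the sparse linear-algebra operations, and a constant number of sorting and neighbor-query passes of size $O(n\log n)$. Throughout I would exploit that the diffusion embedding $x_i\mapsto(\lambda_1^t\psi_1(x_i),\dots,\lambda_M^t\psi_M(x_i))$ lives in $\mathbb{R}^M$ with $M=O(1)$, so that diffusion distances are ordinary Euclidean distances in a constant-dimensional space and fast spatial data structures (for instance cover trees) on this embedding incur no dependence on the ambient dimension $D$.

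First I would account for graph construction. Computing all $\kNN$ ambient nearest neighbors costs $C_{\text{NN}}$ by definition; assembling the sparse weight matrix $W$, the degree matrix $D$, and the transition matrix $P=D^{-1}W$ from these touches only $O(n\kNN)$ nonzero entries, hence costs $O(n\kNN)$. The symmetric conjugate $D^{-\frac12}WD^{-\frac12}$ is equally sparse, and extracting its top $M=O(1)$ eigenpairs by an iterative eigensolver requires $O(1)$ sparse matrix-vector products (under the approximate low-rank / spectral-gap hypothesis justifying the truncation to $M$ terms in Section \ref{subsec:BackgroundDG}), each costing $O(n\kNN)$ plus $O(nM)=O(n)$ for orthogonalization; this is absorbed into the $O(n\kNN)$ term. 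Given the $k$-nearest neighbors, each density value $p(x_i)$ is a sum over $O(\kNN)$ terms, so the full density estimate likewise costs $O(n\kNN)$.

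Next I would treat the two geometry-dependent quantities, $\rho_t$ and the final label-propagation loop, which is where the density hypothesis enters. Forming the diffusion embedding is $O(M)=O(1)$ per point, i.e. $O(n)$; I then build a single nearest-neighbor structure on these $M$-dimensional vectors. To compute $\rho_t(x)=\min\{D_t(x,y)\mid p(y)\ge p(x),\,x\neq y\}$ for $x\neq\argmax_z p(z)$, I query the $O(\log n)$ diffusion-nearest neighbors of $x$ and scan them for one of higher density; by hypothesis this succeeds for all but $O(\log n)$ points, each such query costing $O(\log n)$ in constant dimension, for $O(n\log n)$ in total. For the $O(\log n)$ exceptional points (and the global density maximizer, whose $\rho_t$ is a maximum over all $n$ distances) I allow an $O(n)$ exhaustive scan each, contributing only $O(n\log n)$. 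The final loop of Algorithm \ref{alg:LAND} processes points in decreasing $p$-order, so when $x_{\ell_i}$ is reached every strictly higher-density point already carries a label; thus $z_i=\argmin_z\{D_t(z,x_{\ell_i})\mid p(z)>p(x_{\ell_i}),\,Y(z)>0\}$ is exactly the nearest higher-density diffusion neighbor, found by reusing the same $O(\log n)$-neighbor lists, again $O(n\log n)$ overall. The two sorts (by $\Dt$ and by $p$) are $O(n\log n)$, computing $\Dt=p\cdot\rho_t$ is $O(n)$, and the $B\le n$ oracle queries are $O(n)$. Summing the stages yields $O(C_{\text{NN}}+n\kNN+n\log n)$.

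The main obstacle is the treatment of $\rho_t$ and the propagation step, since a naive implementation compares every pair of points and costs $\Theta(n^2)$. The argument hinges on two facts: that the diffusion embedding is $O(1)$-dimensional, so nearest-neighbor search is genuinely logarithmic and immune to the ambient dimension $D$; and that the hypothesis that all but $O(\log n)$ points have a higher-density point among their $O(\log n)$ diffusion-nearest neighbors caps the search for almost every point while leaving only logarithmically many points to be handled by brute force. Making the accounting of the exceptional points precise---so that their individual $O(n)$ cost aggregates to $O(n\log n)$ rather than spoiling the bound---is the delicate part; the remainder is routine bookkeeping of sparse linear algebra and sorting.
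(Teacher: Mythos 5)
Your proposal is correct and follows essentially the same route as the paper's proof: a stage-by-stage cost accounting in which constructing $P$ costs $O(C_{\text{NN}})$, density estimation costs $O(n\kNN)$, and the computation of $\rho_{t}$, the sorts, and the final label-propagation pass each cost $O(n\log(n))$ by the stated density hypothesis. Your write-up is more detailed than the paper's---notably the explicit $O(n)$ brute-force handling of the $O(\log(n))$ exceptional points and the observation that nearest-neighbor queries in the $M=O(1)$-dimensional diffusion embedding avoid any dependence on $D$---but the decomposition and the role of the hypothesis are identical.
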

\begin{proof}
The construction of the Markov transition matrix $P$ has complexity $O(C_{\text{NN}})$.  The subsequent kernel density estimation for all points is then $O(n\kNN)$.  The computation of $\rho_{t}$ for all points is $O(n\log(n))$, where we assume that all except for $O(\log(n))$ points has a higher density point within its $O(\log(n))$ $D_{t}$-nearest neighbors.  To estimate the modes from $\Dt$ requires sorting $n$ values, so has complexity $O(n\log(n))$.  Once the modes are estimated, labeling all points has complexity $O(n\log(n))$ by the assumption that all except for $O(\log(n))$ points has a higher density point within its $O(\log(n))$ $D_{t}$-nearest neighbors.  The result follows.  
\end{proof}

In the worst case, $C_{\text{NN}}=n^{2}$, so that LAND has quadratic complexity in $n$.  When the data has intrinsically low-dimensional structure, fast nearest neighbor searches reduce this complexity to be quasilinear in $n$.

\begin{cor}\label{cor:CC}Let $\{x_{i}\}_{i=1}^{n}\subset\mathbb{R}^{D}$ be data to label.  When the underlying data is intrinsically $d$-dimensional structure (in the sense of doubling dimension) and when $\kNN \ll \log(n)$, LAND has computational complexity $O(DC^{d}n\log(n)^{2}).$
\end{cor}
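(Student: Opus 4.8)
The plan is to specialize the preceding complexity theorem, which already reduces the total running time of LAND to $O(C_{\text{NN}}+n\kNN+n\log(n))$; since only $C_{\text{NN}}$ depends on \emph{how} nearest neighbors are found, it suffices to bound that term under the doubling-dimension hypothesis and then check that the remaining terms are absorbed. I would first observe that LAND performs two logically separate neighbor searches: an \emph{ambient} search in $\mathbb{R}^{D}$, which supplies the $\kNN$ Euclidean neighbors used to assemble the sparse transition matrix $P$ and the kernel density estimate $p$, and a \emph{diffusion} search in the $M=O(1)$-dimensional embedding $x\mapsto(\lambda_{1}^{t}\psi_{1}(x),\dots,\lambda_{M}^{t}\psi_{M}(x))$, which supplies the $O(\log(n))$ diffusion-distance neighbors needed to evaluate $\rho_{t}$ at all but $O(\log(n))$ points.

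For the ambient search I would invoke the standard guarantees for a cover tree (or any metric tree adapted to doubling metrics) on $n$ points of doubling dimension $d$: construction uses $O(C^{d}n\log(n))$ distance evaluations and a query inspects $O(C^{d}\log(n))$ candidates, where $C$ depends only on the doubling constant. Each distance in $\mathbb{R}^{D}$ costs $O(D)$, so computing the $\kNN$-neighbor lists of all $n$ points costs $O(DC^{d}n\log(n))$. For the diffusion search the embedding is only $O(1)$-dimensional, so the geometric constants collapse, but each point now requires its $O(\log(n))$ nearest neighbors; a query returning $k=O(\log(n))$ ordered neighbors costs $O(\log(n))$ per retrieved neighbor, hence $O(\log(n)^{2})$ per point and $O(n\log(n)^{2})$ in total. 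Because $DC^{d}\ge 1$, the quantity $O(DC^{d}n\log(n)^{2})$ simultaneously dominates both searches as well as the sorting used to extract the $\Dt$-modes ($O(n\log(n))$), the label-propagation sweep ($O(n\log(n))$), and the density term $n\kNN$, which is $o(n\log(n))$ precisely because $\kNN\ll\log(n)$. Collecting these bounds gives the claimed $O(DC^{d}n\log(n)^{2})$.

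The main obstacle is deriving the intrinsic factor $C^{d}$ rigorously in place of the ambient $2^{\Theta(D)}$ that a naive axis-aligned partition would incur: this rests on the packing estimate that any metric ball contains at most $C^{d}$ disjoint balls of half its radius, followed by a charging argument bounding the number of tree cells a query must open by $C^{d}\log(n)$. A secondary point I would make precise is the $\log(n)^{2}$ factor: I must confirm that retrieving $k=O(\log(n))$ ordered neighbors (rather than a single one) genuinely inflates the per-point diffusion query to $O(\log(n)^{2})$, and that no cheaper batched scheme is tacitly assumed, so that the stated prefactor $DC^{d}$ and the $\log(n)^{2}$ factor are exactly the dominant costs.
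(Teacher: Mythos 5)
Your argument is correct and rests on the same two pillars as the paper's proof: the decomposition $O(C_{\text{NN}}+n\kNN+n\log(n))$ from the preceding theorem, and the cover-tree guarantee under the doubling-dimension hypothesis to control $C_{\text{NN}}$. Where you genuinely diverge is in the bookkeeping of the $\log(n)^{2}$ factor. The paper charges the ambient $\kNN$-search at $O(DC^{d}\kNN n\log(n))$ (note the $\kNN$ multiplicity, which your ambient bound $O(DC^{d}n\log(n))$ drops---harmless here, since $\kNN\ll\log(n)$ restores it inside the final bound), and then obtains $O(DC^{d}n\log(n)^{2})$ simply by absorbing $\kNN\le\log(n)$; the diffusion-neighbor computation underlying $\rho_{t}$ is taken to cost only $O(n\log(n))$, as asserted without further justification in the theorem. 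You instead isolate the diffusion-embedding search as a separate metric search in $M=O(1)$ dimensions and derive the $\log(n)^{2}$ from it: $O(\log(n))$ retrieved neighbors at $O(\log(n))$ per retrieval, giving $O(n\log(n)^{2})$, which dominates everything else since $DC^{d}\ge 1$. This is a more conservative accounting of the $\rho_{t}$ step than the theorem's $O(n\log(n))$ claim, and arguably a more honest one, since the theorem never explains how the $O(\log(n))$ $D_{t}$-nearest neighbors are found; your version buys a self-contained justification of the exponent $2$ on $\log(n)$, at the price of implicitly strengthening (and thus slightly restating) the cost model used in the theorem you are specializing. Either accounting lands on the stated bound $O(DC^{d}n\log(n)^{2})$, so the corollary holds under both.
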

\begin{proof}
In the case that the data has intrinsically low-dimensional structure in the sense of doubling dimension, the cover tree indexing structure \cite{Beygelzimer2006_Cover} may be used so that to compute each points $\kNN$ has complexity $O(DC^{d}\kNN n\log(n))$.  The result follows.
\end{proof}

Corollary \ref{cor:CC} suggests that the proposed algorithm is appropriate for large numbers of data points $n$ in high dimension, provided that the intrinsic dimensionality of the data is small.
%%%

\section{Experimental Analysis}
\label{sec:Experiments}

We perform experiments on three representative synthetic datasets, as well as two real hyperspectral images \footnote{\url{http://www.ehu.eus/ccwintco/index.php/Hyperspectral_Remote_Sensing_Scenes}}.  Comparisons are made between LAND and two related methods:

\begin{enumerate}

\item \emph{LAND with random query points.}  This algorithm consists of Algorithm \ref{alg:LAND}, but with random points selected for querying, rather than the maximizers of $\Dt$.  Comparison with LAND will suggest if the query points determined by diffusion geometry and density---as captured by $\Dt$---are actually of significant value.  

\item \emph{Cluster-based active learning (CBAL)}. This algorithm \cite{Dasgupta2008_Hierarchical} is implemented using a hierarchical tree constructed from average linkage clustering.

\end{enumerate}

Three performance metrics are used to compare the active learning results.  \emph{Overall accuracy (OA)} is the ratio of correctly labeled pixels to the total number of pixels.  \emph{Average accuracy (AA)} averages the OA of each class, equalizing the significance of small and large classes.  \emph{Cohen's }$\kappa$-\emph{statistic} ($\kappa$) is a measure of agreement between two labelings that is robust to random chance \cite{Cohen1960_Coefficient}.

\subsection{Experiments on Synthetic Data}
 
Experimental results on the three synthetic datasets introduced in Figure \ref{fig:HierarchicalPurities} are shown in Figure \ref{fig:SyntheticHSI_Results}, illustrating the efficacy of LAND.  In all cases, LAND achieves near perfect accuracy with fewer than 10 labels, while the comparison methods converge to high accuracy much more gradually.  

\begin{figure}[!htb]
	\centering
	\begin{subfigure}[t]{.32\textwidth}
		\includegraphics[width=\textwidth]{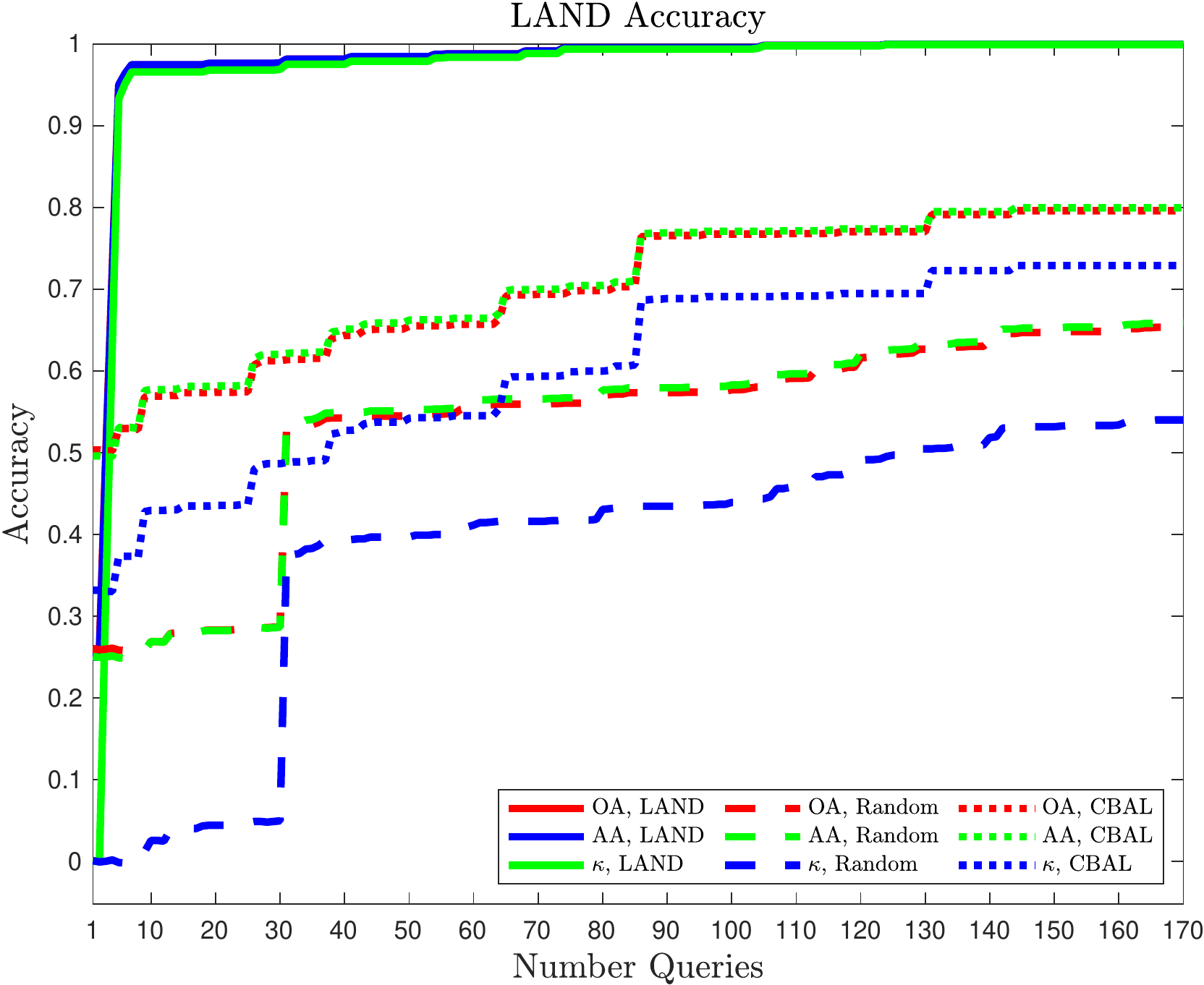}
		\subcaption{Bottleneck data results}
	\end{subfigure}
	\begin{subfigure}[t]{.32\textwidth}
		\includegraphics[width=\textwidth]{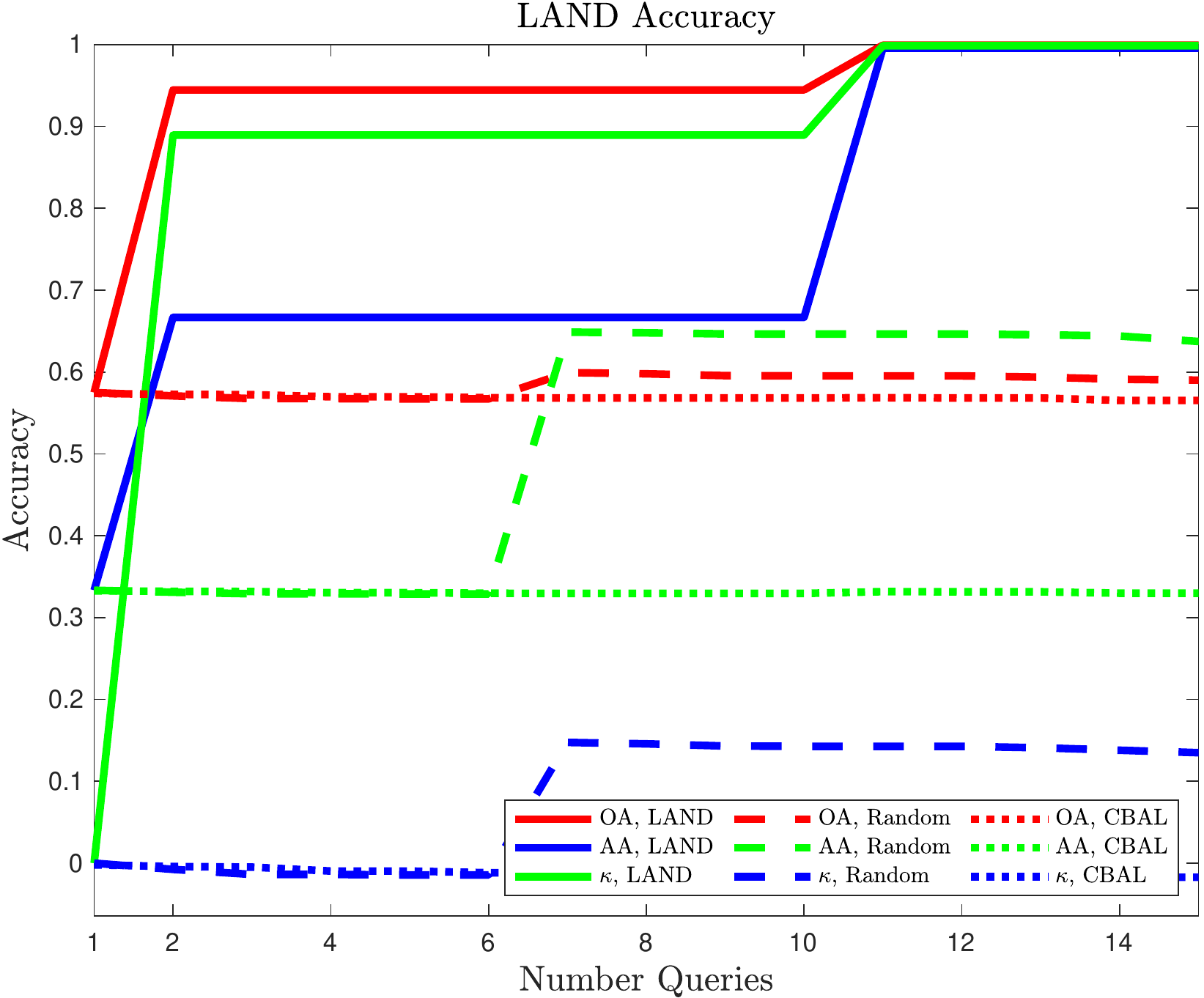}
		\subcaption{Geometric data results}
	\end{subfigure}
	\begin{subfigure}[t]{.32\textwidth}
		\includegraphics[width=\textwidth]{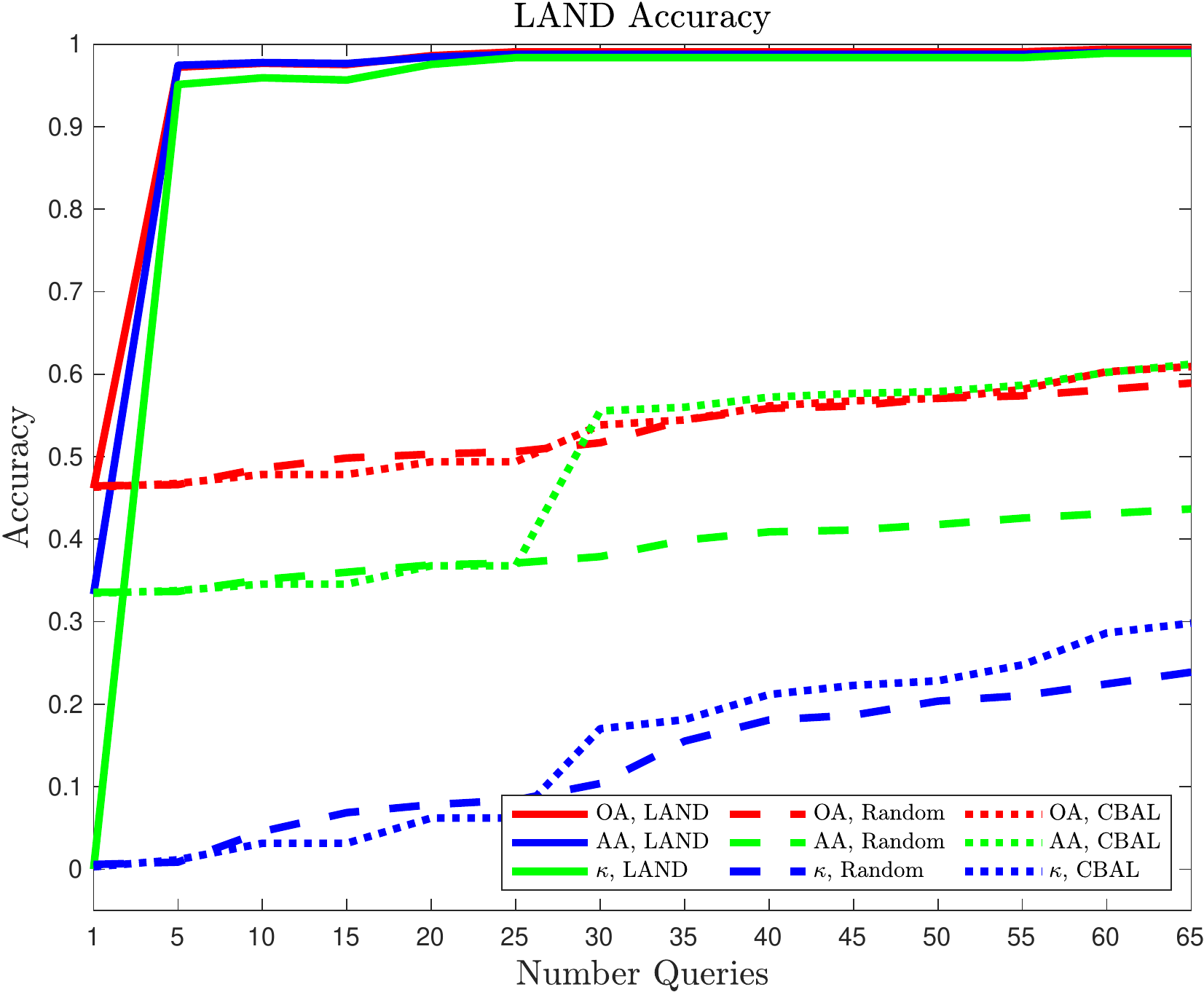}
		\subcaption{Gaussian data results}
	\end{subfigure}
		\caption{\label{fig:SyntheticHSI_Results}Experimental results on the synthetic datasets introduced in Figure \ref{fig:HierarchicalPurities}.  We see that LAND achieves rapid convergence to perfect labeling accuracy, compared to much slower convergence for the two comparison methods.}
\end{figure}

\subsection{Experiments on Hyperspectral Data}

In order to illustrate the efficacy of LAND on real data, we demonstrate its performance on \emph{hyperspectral imagery}, which constitutes an important data type in the remote sensing of the environment \cite{Chang2003_Hyperspectral}.  An HSI is an image consisting of $D$ spectral bands, each localized to a narrow electromagnetic range.  The concatenation of these $D$ spectral bands provides highly detailed information about the materials being imaged, and can allow for precise discrimination of specific objects in the scene.  While nominally a 3-dimensional tensor, an HSI is often analyzed by collapsing the spatial coordinates to produce a dataset $\{x_{i}\}_{i=1}^{n}\subset\mathbb{R}^{D}$, where $n$ is the total number of pixels in the image and $D$ is the total number of spectral bands.  When large training sets of labeled pixels are available, classification of an HSI scene may be effectively performed using a range of techniques, including support vector machines \cite{Melgani2004_Classification}, deep learning \cite{Chen2014_Deep}, and random forests \cite{Ham2005_Investigation}.

Traditional supervised learning has led to strong empirical performance for HSI classification.  However, supervised learning for HSI---particularly state-of-the-art deep learning---is predicated on the availability of large labeled training sets, which must be collected and annotated, typically by human experts.  The need for large training sets is exacerbated by the high dimensionality of the data.  The collection of large training sets may not be practical in the context of HSI, where there is a huge number of possible classes and large variabilities are introduced by sensing conditions.  Indeed, the task of generating huge training sets for general HSI imagery is quite onerous, and may even require the deployment of humans to observe physically the scene that has been remotely sensed, which is very resource intensive.  It is thus crucial to develop methods that can label HSI with no labeled training data \cite{Acito2003, Paoli2009, Cahill2014_Schroedinger, Cariou2015, Gillis2015, Zhang2016, Chen2017, Zhu2017, Zhai2017_New, Murphy2019_Unsupervised, Murphy2019_Spectral} or a combination of labeled and unlabeled data \cite{Camps2007_Semi, Ratle2010, Li2013_1}.

Active learning for HSI is an important method for achieving high-accuracy classification results, without requiring large labeled training sets \cite{Rajan2008_Active, Tuia2009_Active, Li2010_Semisupervised, Sun2015_Active, Zhang2016_Active, Murphy2018_Iterative}.  These methods typically query for labels points near the boundaries of classes, thus improving the convergence of the learning algorithm towards a good classifier.  LAND, on the other hand, exploits cluster structure in the data.  

\subsubsection{Experimental Results for HSI}
 
We perform active learning experiments on two real HSI datasets, shown in Figure \ref{fig:SalinasA_Data} and \ref{fig:Pavia_Data}, respectively.  

\begin{figure}[!htb]
	\centering
	\begin{subfigure}[t]{.49\textwidth}
		\includegraphics[width=\textwidth]{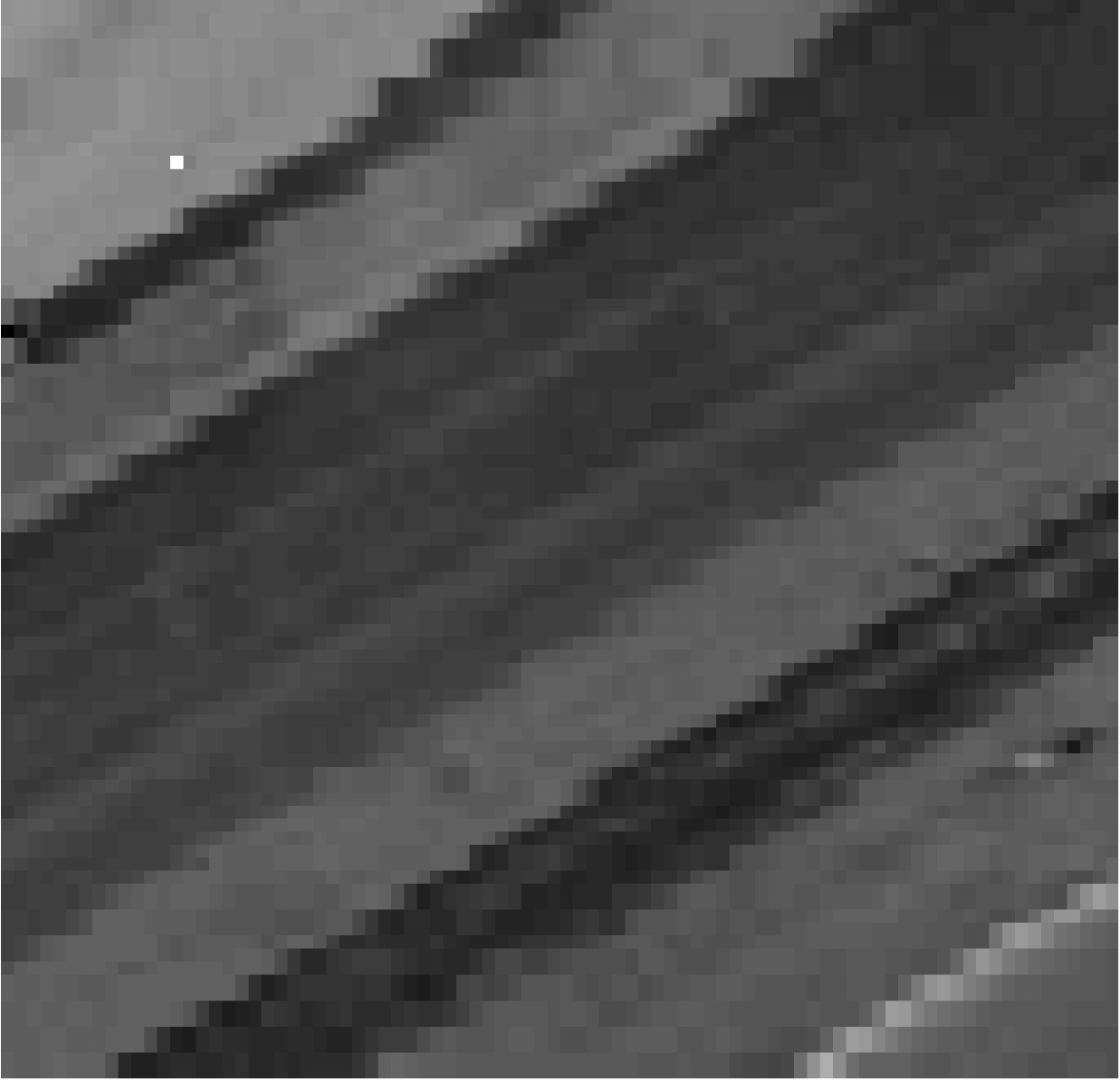}
		\subcaption{Salinas A}
	\end{subfigure}
	\begin{subfigure}[t]{.49\textwidth}
		\includegraphics[width=\textwidth]{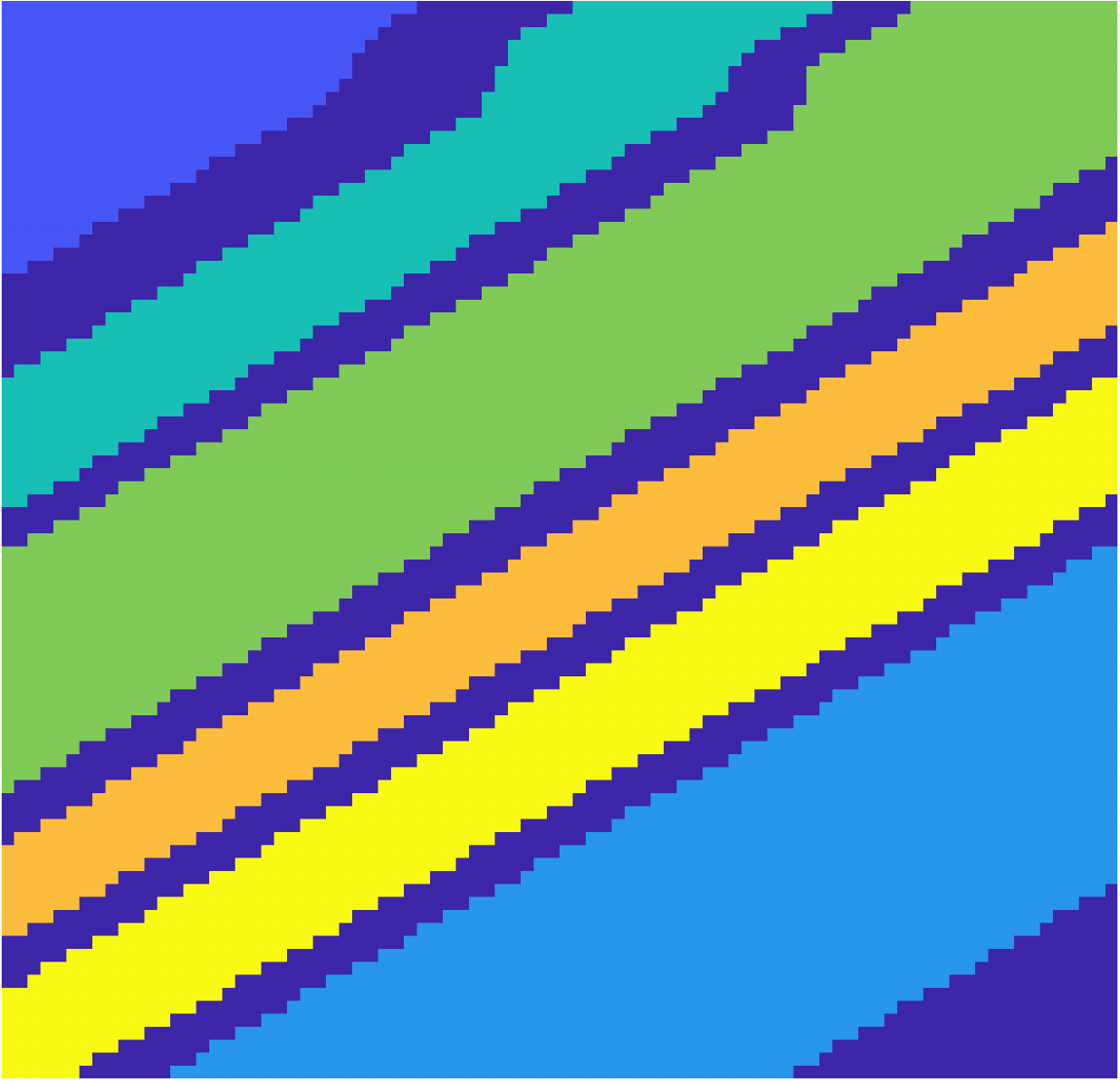}
		\subcaption{Ground Truth}
	\end{subfigure}
		\caption{\label{fig:SalinasA_Data}  The Salinas A dataset consists of $83 \times 86 = 7138$ pixels in $D=224$ dimensions.  The image has spatial resolution $3.7$m/pixel, and was recorded over Salinas, USA by the Aviris sensor.  The six labelled classes are arranged in diagonal rows, and are quite spatially regular.  The sum across all spectral bands is shown in (a), and the labeled ground truth is shown in (b), with pixels having the same class being given the same color.}
\end{figure}

\begin{figure}[!htb]
	\centering
	\begin{subfigure}[t]{.49\textwidth}
		\includegraphics[width=\textwidth]{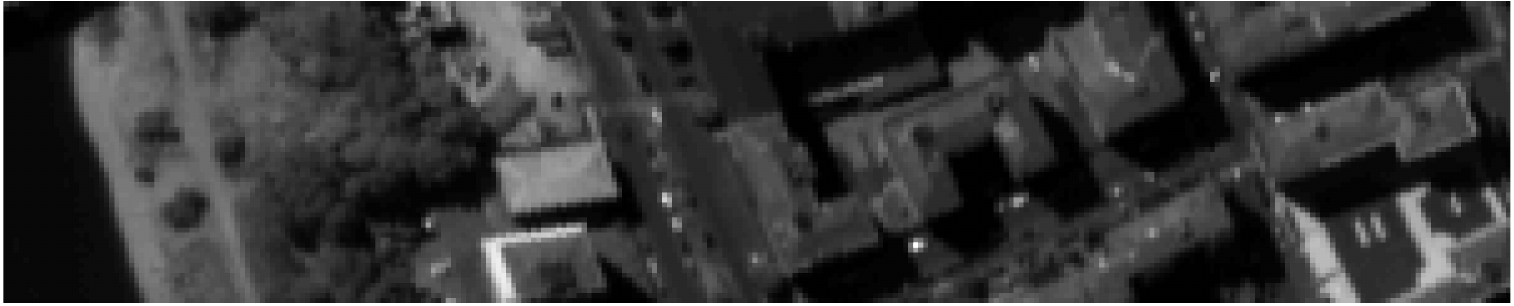}
		\subcaption{Pavia}
	\end{subfigure}
	\begin{subfigure}[t]{.49\textwidth}
		\includegraphics[width=\textwidth]{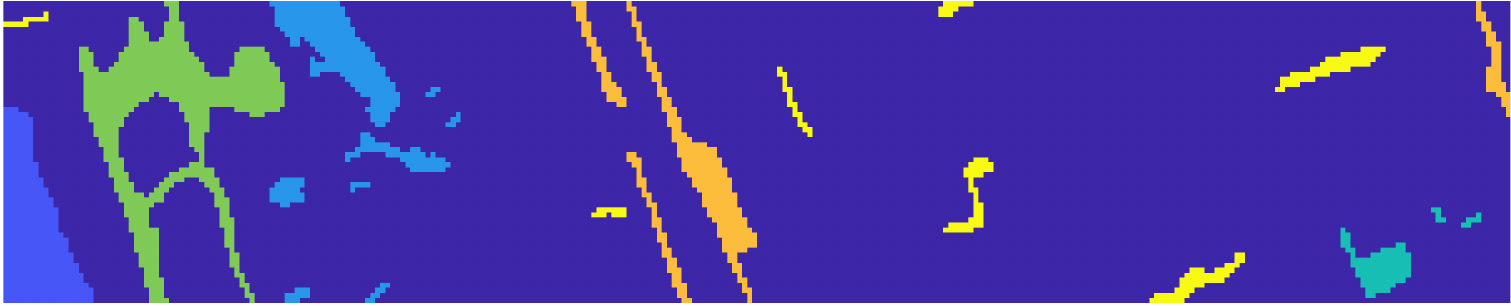}
		\subcaption{Ground Truth}
	\end{subfigure}
		\caption{\label{fig:Pavia_Data}  Pavia data consists of $270\times 50=13500$ subset of the full Pavia data set.  The image has spatial resolution $1.3$m/pixel, and was recorded over Pavia, Italy by the  ROSIS sensor.  It consists of 6 spatial classes, some of which are quite well-spread out in the image.  The sum across all spectral bands is shown in (a), and the labeled ground truth is shown in (b), with pixels having the same class being given the same color.}
\end{figure}

Experimental results for the three methods on the Salinas A and Pavia datasets are shown in Figure \ref{fig:RealHSI_Results}.  For the Salinas A dataset, accuracy with LAND is strong, with only 10 labels leading to highly accurate empirical results, and subsequent labels leading to rapid improvement towards perfect accuracy.  In particular, compared to using random query labels or CBAL, the improvement of LAND as a function of the number of queries is rapid.  For the Pavia dataset, there is a similar early jump in accuracy for LAND, while the improvement is slower for the comparison methods.

\begin{figure}[!htb]
	\centering
	\begin{subfigure}[t]{.49\textwidth}
		\includegraphics[width=\textwidth]{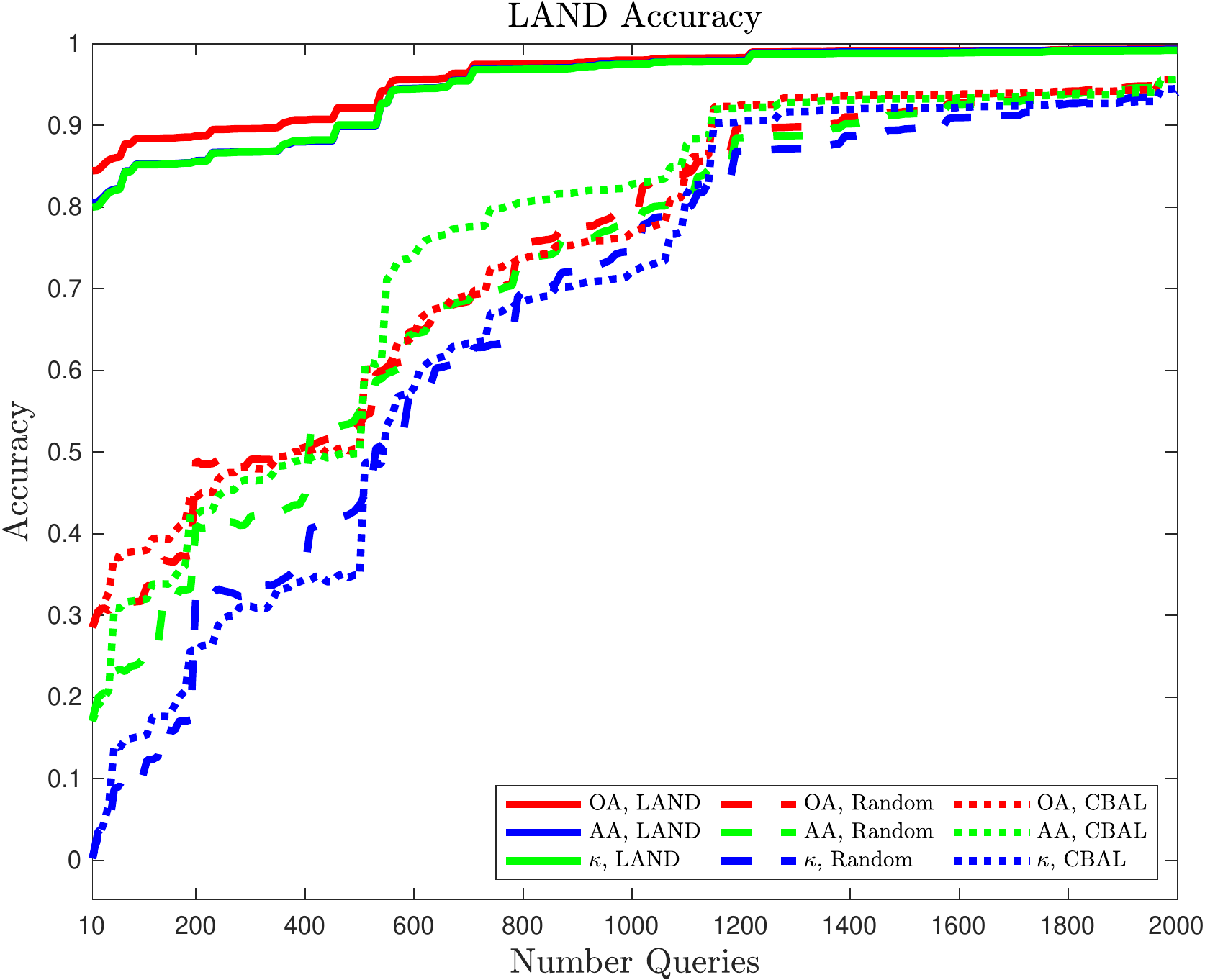}
		\subcaption{Salinas A results}
	\end{subfigure}
	\begin{subfigure}[t]{.49\textwidth}
		\includegraphics[width=\textwidth]{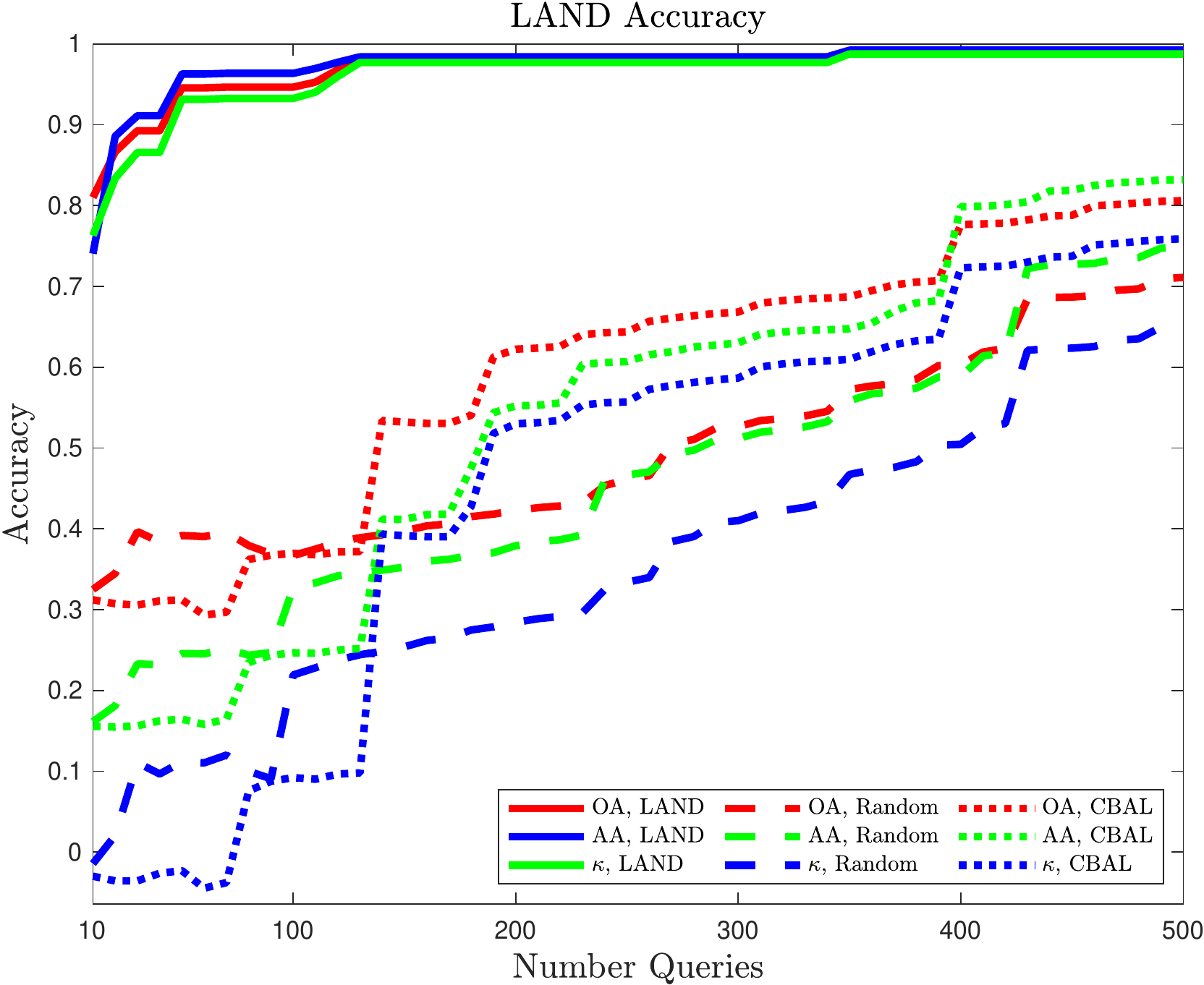}
		\subcaption{Pavia results}
	\end{subfigure}
		\caption{The active learning results for the Salinas A and Pavia datasets are shown in (a) and (b), respectively.  In both cases, the LAND algorithm strongly outperforms the modified LAND variant using randomly selected training data, and the CBAL algorithm.  In particular, LAND is able to achieve a significant improvement in accuracy with a very small number of labels.}
	\label{fig:RealHSI_Results}		
\end{figure}

\section{Conclusions and Future Work}
\label{sec:Conclusions}

The LAND algorithm integrates diffusion geometry and density estimation to efficiently estimate query points that are highly impactful on overall labeling accuracy in the active learning setting.  Our theoretical and empirical analyses show LAND's robustness to geometric distortions of the underlying data classes, and our experiments on real-world HSI demonstrate its effectiveness in accurately labeling high-dimensional datasets with a very small number of query points.  

In the context of HSI, developing active learning methods that incorporate spatial proximity into the underlying diffusion process is of interest.  This information may suggest that it is useful to query information in a spatially homogeneous region, where it can be most impactful.  The integration of spatial information into a variant of the LUND algorithm adapted for HSI has proven effective \cite{Murphy2018_Iterative, Murphy2019_Spectral}, and it is likely that such information would similarly boost the effectiveness of LAND.  

It is of interest to develop a cross-validation scheme that exploits the active learning queries in order to iteratively update the optimal choice of time parameter $t$.  Indeed, as argued in Section \ref{subsubsec:ComparisonLUND}, the use of a very small (essentially $O(\numclust)$) active learning queries can be used to achieve robustness to the parameter $t$, which is critically important in the LUND algorithm.  However, it may be possible to update the time parameter in an iterative fashion, by selecting at each time step a time scale that separates all the modes learned so far, before querying a new point.  This has the potential to require fewer queries to learn all the classes, since the parameter is being adaptively optimized at each time step, rather than after all queries have been made.

\bibliography{LAND.bib}
\bibliographystyle{unsrt}

\bigskip

\end{document}